\documentclass[oneeqnum, onethmnum, onefignum, onetabnum]{siamltex}

\pdfoutput=1

\usepackage{graphicx}  
\usepackage{amsmath}   
  \usepackage{amssymb}
 \usepackage{mathrsfs}
 \usepackage{mathabx}
\usepackage{mathdots}
\usepackage{cite}
\usepackage[ruled,lined,boxed]{algorithm2e}

\newcommand{\bfy}{{\bf y}}
\newcommand{\bfx}{{\bf x}}
\newcommand{\bfz}{{\bf z}}

\newcommand{\bfv}{{\bf v}}
\newcommand{\bfw}{{\bf w}}
\newcommand{\bfr}{{\bf r}}

\newcommand{\bfs}{{\bf s}}

\newcommand{\reals}{{\mathbb R}}
\newcommand{\wcheck}{\widecheck}
\newcommand{\wtilde}{\widetilde}

\newcommand{\svd}{{\rm SVD}}

\newtheorem{thm}{Theorem}[section]

\newtheorem{lma}[thm]{Lemma}
\newtheorem{dfn}[thm]{Definition}



\author{Aswin C. Sankaranarayanan\thanks{A.~C.~Sankaranarayanan is with the Department of Electrical and Computer Engineering at Carnegie Mellon University, Pittsbugh, PA. Email: saswin@ece.cmu.edu} 
        \and Pavan K. Turaga\thanks{P.~K.~Turaga is with the Arts Media and Engineering Department at the Arizona State University, Tempe, AZ. Email: pturaga@asu.edu}
        \and Rama Chellappa\thanks{R.~Chellappa is with the Department of Electrical and Computer Engineering at the University of Maryland, College Park, MD. Email: rama@cfar.umd.edu.} \and Richard G. Baraniuk\thanks{R.~G.~Baraniuk is with the Department of Electrical and Computer Engineering at the Rice University, Houston, TX. Email: richb@rice.edu. Web: dsp.rice.edu}}

\title{Compressive Acquisition of Linear Dynamical Systems}

\begin{document}

\maketitle

\begin{abstract}
Compressive sensing (CS) enables the acquisition and recovery of sparse signals and images at sampling rates significantly below the classical Nyquist rate.  Despite significant progress in the theory and methods of CS, little headway has been made in compressive video acquisition and recovery.  Video CS is complicated by the ephemeral nature of dynamic events, which makes direct extensions of standard CS imaging architectures and signal models difficult. In this paper, we develop a new framework for video CS for dynamic textured scenes that models the evolution of the scene as a linear dynamical system (LDS). This reduces the video recovery problem to first estimating the model parameters of the LDS from compressive measurements and then reconstructing the image frames.
We exploit the low-dimensional dynamic parameters (the state sequence) and high-dimensional static parameters (the observation matrix) of the LDS to devise a novel compressive measurement strategy that measures only the time-varying parameters at each instant and accumulates measurements over time to estimate the time-invariant parameters. This enables us to  lower the compressive measurement rate considerably.
We validate our approach and demonstrate its effectiveness with a range of experiments involving  video recovery and scene classification.

\end{abstract}

\begin{keywords}
Compressive sensing, Linear dynamical system, Video compressive sensing
\end{keywords}


\pagestyle{myheadings}
\thispagestyle{plain}
\markboth{ACS, PKT, RC and RGB }{CS-LDS}

\section{Introduction} \label{sec:intro}

The Shannon-Nyquist theorem dictates that to sense features at a particular frequency, we  
must sample uniformly at twice that rate. 
For generic imaging applications, this sampling rate might be too high; in modern digital cameras, invariably, the sensed imaged is compressed immediately without much loss in quality.
For other applications, such as high speed imaging and sensing in the non-visual spectrum, camera/sensor designs based on the Shannon-Nyquist theorem lead to impractical and costly designs.
Part of the reason for this is that the Shannon-Nyquist sampling theory does not exploit any structure in the sensed signal beyond that of band-limitedness.
Signals with redundant structures can potentially be sensed more parsimoniously.
This is the key idea underlying the new field of {\em compressive sensing} (CS) \cite{candes2006robust}.
When the signal of interest exhibits a sparse representation, CS enables sensing at measurement rates below the Nyquist rate. 
Indeed, signal recovery is possible from a number of measurements that is proportional to the sparsity level of the signal, as opposed to its bandwidth.

In this paper, we consider the problem of sensing \emph{videos} compressively.
We are interested in this problem motivated by the success of video compression algorithms, which  indicates that videos are highly redundant.
Bridging the gap between compression and sensing can lead to compelling camera designs that  significantly reduce the amount of data sensed and enable designs for application domains where sensing is inherently costly.

Video CS is challenging for two main reasons:
\begin{itemize}
\item {\bf Ephemeral nature of videos: } The scene changes during the measurement process; moreover, we cannot obtain additional measurements of an event after it has occurred.
\item {\bf High-dimensional signals: } Videos are significantly higher-dimensional than images.
This makes the recovery process computationally intensive. 
\end{itemize}
One way to address these challenges is to narrow our scope to certain parametric models that are suitable for a broad class of videos; this morphs the video recovery problem to one of parameter estimation and provides a scaffold to address the challenges listed above.

In this paper, we develop a  CS framework for videos modeled as linear dynamical systems (LDSs), which is motivated, in part, by the extensive use of such models in characterizing dynamic textures \cite{Chan2005,doretto2003dynamic,saisanDWS01}, activity modeling, and video clustering \cite{turaga2009unsupervised}.
Parameteric models, like LDSs, offer lower dimensional representations for otherwise high-dimensional videos. This significantly reduces the number of free parameters that need to be estimated and,
as a consequence, reduces the amount of data that needs to be sensed.
In the context of  video sensing, LDSs offer interesting tradeoffs by characterizing the video signal using a mix of dynamic/time-varying parameters and static/time-invariant parameters.
Further, the generative nature of LDSs provides a prior for the evolution of the video in both forward and reverse time. 
To a large extent, this property helps us circumvent the challenges presented by the ephemeral nature of videos.

The paper makes the following contributions. 
We propose a framework called {\em CS-LDS} for video acquisition using an LDS model coupled with sparse priors for the parameters of the LDS model.
The core of the framework is a {\em two-step measurement strategy} that enables the recovery of the LDS parameters from compressive measurements by solving a sequence of linear and convex problems.
We demonstrate that CS-LDS is capable of sensing videos with far fewer measurements than the Nyquist rate.
Finally, the LDS parameters form an important class of features for activity recognition and scene analysis, thereby making our camera designs purposive \cite{nayar2006programmable} as well.


\section{Background} \label{sec:prior}


\subsection{Compressive sensing} \label{sec:cs}
CS  deals with
the recovery of a signal $\bfy \in \reals^N$ from undersampled linear measurements
of the form $\bfz = \Phi \bfy + {\bf e}$, where $\Phi \in \mathbb{R}^{M \times N}$ is the measurement matrix, $M < N,$ and ${\bf e}$ is the measurement noise \cite{candes2006robust,donoho2006compressed}.
Estimating $\bfy$ from the measurements $\bfz$ is ill-conditioned, since the linear system formed by $\bfz = \Phi \bfy$ is under-determined.
CS works under the assumption that the signal $\bfy$ is sparse in a basis $\Psi$; that is, the signal $\bfs$, defined as $\bfy = \Psi \bfs$, has at most $K$ non-zero components.
Exploiting the sparsity of $\bfs$, the signal $\bfy$ can be recovered exactly from $M = O(K\log(N/K))$ measurements provided the matrix $\Phi\Psi$ satisfies
the so-called {\em restricted isometry property} (RIP) \cite{baraniuk2008simple}.
In particular, when $\Psi$ is an orthonormal basis
and the entries of the matrix $\Phi$ are  i.i.d.\ samples from a sub-Gaussian distribution,
the product $\Phi\Psi$ satisfies the RIP.
Further, the signal $\bfy$ can be recovered from $\bfz$ by solving a convex problem of the form
\begin{equation}
\mbox{ } \min \| \bfs \|_1 \textrm{ subject to  } \| \bfz - \Phi \Psi \bfs \|_2 \le \epsilon,
\label{eqn:l1prob}
\end{equation}
where $\epsilon$ is an upper bound on the measurement noise ${\bf e}$. It can be shown that the solution to (\ref{eqn:l1prob}) is with
high probability the $K$-sparse solution that we seek. The theoretical guarantees of CS have been extended to {\em compressible} signals,
where the sorted coefficients of $\bfs$ decay rapidly according to a power-law \cite{haupt2006signal}.

There exist a wide range of algorithms to solve (\ref{eqn:l1prob}) under various
approximations or reformulations \cite{candes2006robust,berg2008probing}.
Greedy techniques
such as Orthogonal Matching Pursuit \cite{pati1993orthogonal} and
CoSAMP \cite{needell2009cosamp} solve the sparse approximation problem efficiently with strong convergence properties and low computational complexity.
It is also simple to impose structural constraints such as block sparsity into CoSAMP,
giving variants such as model-based CoSAMP \cite{baraniuk2008model}.

\subsection{Video compressive sensing} \label{sec:videocs}
In this paper, we model a video as a sequence of time-indexed images. Specifically, if $\bfy_t$ is the image of a scene at time $t$, then $\bfy_{1:T} = \{ \bfy_1, \ldots, \bfy_T \}$ is the video of the scene from time $1$ to $T$. Further, we also refer to $\bfy_t$ as the ``video frame'' at time $t$.

In video CS, the goal is to sense a time-varying scene using
compressive measurements of the form $\bfz_t = \Phi_t \bfy_t$, where $\bfz_t, \Phi_t$ and $\bfy_t$ are the compressive measurements, the measurement matrix and the video frame at time $t$, respectively.
Given the sequence of compressive measurements $\bfz_{1:T} = \{ \bfz_1, \bfz_2, \ldots, \bfz_T\}$, our goal is to recover the video  $\bfy_{1:T} = \{ \bfy_1, \bfy_2, \ldots, \bfy_T\}$.
There are currently two fundamentally different imaging architectures for video CS: the single pixel camera (SPC) and the programmable pixel camera. 
The SPC \cite{duarte2008single} uses a single or a small number of sensing elements. Typically, a photo-detector is used to obtain a single measurement at each time instant of the form $\bfz_t = \phi_t^T \bfy_t$, where $\phi_t$ is a pseudo-random
vector of $0$s and $1$s.
Typically, under an assumption of a slowly varying scene, consecutive measurements from the 
SPC are grouped as measurements of the same video frame.
This assumption works only when the scene motion is small or when the  number
of measurements associated with a frame is small.
The SPC provides complete freedom in the spatial multiplexing of pixels; however, there is
no temporal multiplexing.
In contrast, programmable pixel cameras \cite{veeraraghavan2011coded, reddy2011p2c2, hitomi2011video} use a 
full frame sensor array; during each exposure of the sensor array, the shutter at each pixel is 
temporally modulated. This enables extensive temporal multiplexing but a limited amount of 
spatial multiplexing.
A key advantage of SPC-based designs is that they can operate efficiently at wavelengths
(such as the far infrared) that require exotic detectors; in such cases, building a full frame sensor can be prohibitively expensive.

To date, recovery algorithms for the SPC have used various signal models to reconstruct the sensed scene.
Wakin et al. \cite{wakin2006compressive} use 3D wavelets as the sparsifying basis for recovering 
videos from  compressive measurements. 
Park and Wakin \cite{park2009multiscale}
use a coarse-to-fine estimation framework wherein the video, reconstructed at a coarse scale, is 
used to estimate motion vectors that are subsequently used to design dictionaries for reconstruction at a finer scale.
Vaswani \cite{vaswani2008kalman} and Vaswani and Lu \cite{vaswani2009modified} use a
sequential framework that exploits the similarity of support of the signal between adjacent frames of a video.
Under this model, a frame of video is reconstructed using a linear inversion over the support at the previous time instant
and a small-scale CS recovery over the residue to detect components beyond the known support.
Cevher et al. \cite{cevher2008compressive} provide a CS framework for directly sensing innovations over a static scene thereby enabling background subtraction from compressive measurements.

\begin{figure}[!ttt]
\centering
\includegraphics[width=\textwidth]{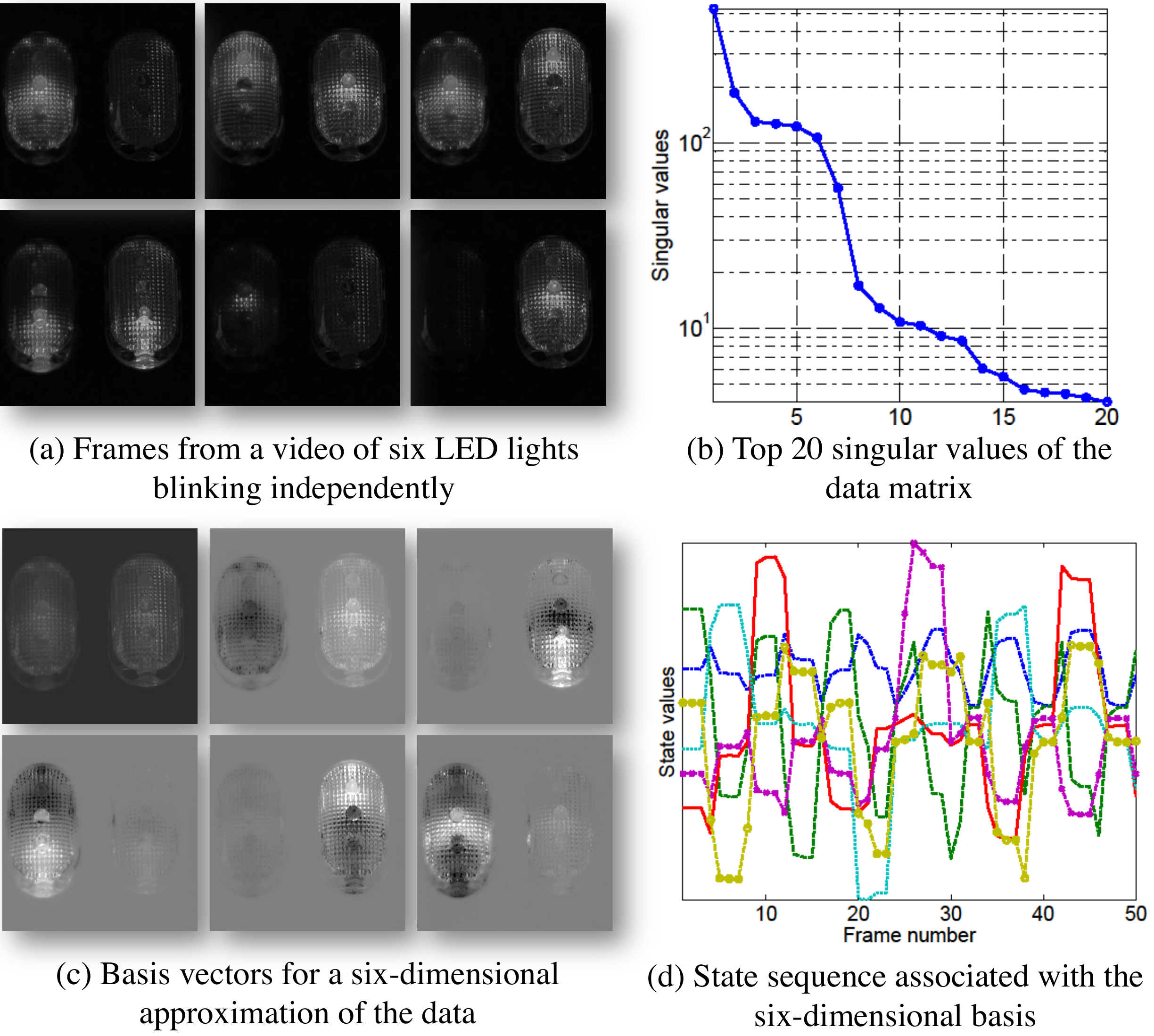}
\caption{An example of an LDS and the models that define it. (a) A few frames of a video of six LEDs flashing independently. (b) Top 20 singular values of the data matrix $[ \bfy_{1:T} ]$ --- formed by stacking  frames of the video as its columns. Note how the singular values, outside the top six, decay rapidly. This is a consequence of the linear nature of light suggests that the frames of the video lie on a six-dimensional subspace. In practice, deviations from linearity   due to saturation lead to small deviations from the six-dimensional subspace as noted from the decaying singular values. 
(c) Basis vectors associated with a six-dimensional approximation of the data. Blacker pixels denote non-negative entries while whiter pixels denote positive entries. Together, they define a six-dimensional subspace that defines the observation model of the LDS.
(d) State sequence associated with the six-dimensional approximation. The smooth variation of the state values indicate predictability over small time durations --- one of the key hallmarks of an LDS. These smooth transitions are captured by the state transition model.}
\label{fig:led_example}
\end{figure}

\subsection{Linear dynamical system model for video sequences} \label{sec:priorlds}
Linear dynamical systems (LDSs) represent an important class of parametric models for time-series data.
A wide variety of spatio-temporal signals have often been modeled as realizations of LDSs.
These include dynamic textures \cite{doretto2003dynamic}, traffic scenes \cite{Chan2005}, video inpainting \cite{ding2007rank}, multi-camera tracking \cite{ayazoglu2011dynamic} and human activities \cite{turaga2009unsupervised}. The interested reader is  referred to \cite{sznaier2012compressive} for a survey of the use of LDSs as a concise representation for a wide range of computer vision problems.

Intuitively, a LDS for a video comprises of two models. First, an {\em observation model} that suggests that frames of the video lie close to a $d$-dimensional subspace; the frame of the video at time $t$ can be represented as $\bfy_t \approx C \bfx_t,$ where $C$ is a basis for the subspace and $\bfx_t$ are the subspace coefficients or the state vector at time $t$. Second, the trajectory that the video charts out in this $d$-dimensional subspace varies smoothly, is predictable and modeled by a linear evolution of the form $\bfx_{t+1} \approx A \bfx_{t}$.
Figure \ref{fig:led_example} provides an  example of an LDS.

We now formally define the LDS for a video. The model equations are given by 
\begin{align}
\bfy_t    =&   C \bfx_t  +   \bfw_t,  &\bfw_t    \sim    N(0,R) \\
\bfx_{t+1} =&   A \bfx_t  +   \bfv_t, &   \bfv_t    \sim    N(0,Q) \label{eqn:markov_process}
\end{align}
where $\bfx_t \in {\mathbb R}^d$  is the  state vector at time $t$,  $d$ is the dimension of the state space,
$A \in {\mathbb R}^{d\times d}$ is the state transition matrix, $C \in {\mathbb R}^{N \times d}$ is the observation matrix,  $\bfy_t \in \mathbb{R}^N$ represents the observed measurements, where for the videos of interest in this paper, $d \ll N$. $\bfw_t$ and $\bfv_t$ are noise components modeled as Gaussian with $0$ mean vector and covariance matrices given by $R \in \mathbb{R}^{N \times N}$ and $Q \in \mathbb{R}^{d \times d}$, respectively.  The Gaussian assumption for the process noise is not necessarily an optimal one, but is made for the sake of simplifying the model estimation algorithm. It is known to work well for representing a large class of dynamic textures \cite{doretto2003dynamic}. 

An LDS is  parameterized by the matrix pair $(C, A)$. Note that the choice of $C$ and the state sequence $\bfx_{1:T}$ is unique only up to a $d \times d$ linear transformation given the inherent ambiguities in the notion of a state space.
In particular, given \emph{any} invertible $d \times d$ matrix $L$, the LDS defined by $(C, A)$ with the state sequence $ \bfx_{1:T} $ is equivalent to the LDS defined by $(CL, L^{-1}A L )$ with the state sequence  $L^{-1} \bfx_{1:T} = \{ L^{-1} \bfx_1, L^{-1} \bfx_2, \ldots, L^{-1}\bfx_T \}$.
This lack of uniqueness has implications that we will touch upon later in Section \ref{sec:obs}.


Given a video sequence, the most common approach to fitting an LDS model is to first estimate a lower-dimensional embedding of the observations via principal component analysis (PCA) and then learn the temporal dynamics captured in $\bfx_t$, and equivalently $A$. The most popular model estimation algorithms are N4SID \cite{van1994n4sid}, PCA-ID \cite{soatto2001dynamic}, and expectation-maximization (EM) \cite{Chan2005}. N4SID is a subspace identification algorithm that provides an asymptotically optimal solution for the model parameters. However, for large problems the computational requirements make this method prohibitive.
PCA-ID \cite{soatto2001dynamic} is a sub-optimal solution to the learning problem. It makes the assumption that estimation of the observation matrix $C$ and the state transition matrix $A$ can be separable, which makes it possible to estimate the parameters of the model very efficiently via PCA. Under this assumption, one first estimates the observation matrix $C$, (space-filter) and then uses the result to estimate the state state transition matrix $A$ (time-filter) \cite{doretto2003dynamic}. This learning problem can also be posed as a maximum likelihood estimation of the model parameters that maximize the likelihood of the observations, which can be solved by the EM algorithm \cite{Chan2005}.

\section{CS-LDS Architecture} \label{sec:cslds}
We provide a high level overview of our proposed framework for video CS; the goal here is  to build a CS framework, implementable on the SPC, for videos that are modeled as LDSs.
We flesh out the details in Sections \ref{sec:coeff} and \ref{sec:obs}.
This amounts to
estimating the LDS parameters from compressive measurements, i.e, we seek to recover the model parameters
 $C$ and $\bfx_{1:T}$ given compressive measurements of the form $\bfz_t = \Phi_t \bfy_t = \Phi_t C \bfx_t$. We recall that $C$ is the time-invariant observation matrix of the LDS, and  $\bfy_t$ and $\bfx_t$ 
 are the video frame and the state at time $t$, respectively.
The compressive measurements $\bfz_{1:T}$ are hence expressed as bilinear terms in the unknown parameters $C$ and $\bfx_{1:t}$. 
Handling bilinear unknowns typically requires non-convex optimization techniques thereby invalidating conventional CS recovery algorithms.
To avoid this, we propose a two-step sensing method that is specifically designed to address the
bilinearity; we refer to this sensing method and its associated recovery algorithm as the \emph{CS-LDS} framework \cite{sankaranarayanan2010compressive} .

{\flushleft {\bf Measurement model: }} We summarize the CS-LDS measurement model as follows.
At  time $t$, we take two sets of measurements:
\begin{equation}
\bfz_t = \left(
\begin{array}{c} \wcheck{\bfz}_t \\ \wtilde{\bfz}_t \end{array} \right) =
\left[ \begin{array}{c} \wcheck{\Phi} \\ \wtilde{\Phi}_t \end{array} \right] \bfy_t =
\Phi_t \bfy_t,
\label{eqn:meas01}
\end{equation}
where $\wcheck{\bfz}_t \in {\mathbb R}^{\wcheck{M}}$ and $\wtilde{\bfz}_t \in {\mathbb R}^{{\wtilde{M}}}$ such that the total number of measurements at each frame is $M = \wcheck{M}+\wtilde{M}$.\footnote{The SPC obtains only one measurement at each time instant. Multiple measurements for a video frame are obtained by grouping  consecutive measurements from the SPC. When $M$ is small, compared to the sampling rate of the SPC, this is an acceptable approximation especially for slowly varying scenes.}
The measurement matrix in (\ref{eqn:meas01}) is composed of two distinct components: the \emph{time-invariant} part $\wcheck{\Phi}$ and the \emph{time-varying} part $\wtilde{\Phi}_t$.
We denote by $\wcheck{\bfz}_t$ the {\em common} measurements and by $\wtilde{\bfz}_t$ the {\em innovation} measurements.

We solve for the LDS parameters in two steps. First, we obtain an estimate of the state sequence using only the common measurements $\wcheck{\bfz}_{1:T}$. Second, we use this state sequence estimate to recover the observation matrix $C$ using the innovation measurements.

{\flushleft {\bf State sequence estimation: }} We recover the state sequence $\bfx_{1:T}$ using only the common measurements
 $\wcheck{\bfz}_{1:T}$. The key idea is that when $\bfy_{1:T}$ form the observations of an LDS with system matrices $(C, A)$,  the measurements $\wcheck{\bfz}_{1:T}$ form the observations of an LDS with system matrices $(\wcheck{\Phi}C, A)$.
Estimation of the state sequence now can be mapped to a simple exercise in system identification.
In particular, an estimate of the state sequence can be obtained by the singular value decomposition ($\svd$) of the block-Hankel matrix
\begin{equation}
\textrm{Hank}(\wcheck{\bfz}_{1:T}, d)  = \left[
\begin{array}{ccccc}
\wcheck{\bfz}_1 & \wcheck{\bfz}_2 &  \cdots & \cdots & \wcheck{\bfz}_{T-d+1} \\
\wcheck{\bfz}_2 &   \iddots   &  \iddots &  & \wcheck{\bfz}_{T-d+2} \\
\vdots &    \iddots &  \iddots &  \iddots  & \vdots \\
\vdots 		   &                                  & \iddots &  \iddots &      \vdots \\
\wcheck{\bfz}_{d} & \cdots & \cdots & \wcheck{\bfz}_{T-1}  & \wcheck{\bfz}_T \\
\end{array}
\right].
\label{eqn:hankelmat}
\end{equation}
Given the $\svd(\textrm{Hank}(\wcheck{\bfz}_{1:T}, d) ) = U_H S_H V_H^T$,  the state sequence estimate is given by
$$[ \widehat{\bfx}_{1:T} ] =  S_H V_H^T.$$
In Section \ref{sec:coeff}, we leverage results from system identification to analyze the properties of this particular estimate as well as characterize the number of measurements $\wcheck{M}$ required.
 
{\flushleft {\bf Observation matrix estimation: }} Given an estimate of the state sequence, $\widehat{\bfx}_{1:T}$, the relationship between the observation matrix $C$ and the innovation measurements  is linear, i.e., $\wtilde{\bfz}_t = \wtilde{\Phi}_t C \widehat{\bfx}_t$.
In addition, $C$ is \emph{time-invariant}. Hence, we can accumulate  innovation measurements over a duration of time to stably reconstruct $C$.
This significantly reduces the number of innovation measurements $\wtilde{M}$ required at \emph{each} frame. This is especially important in the context of sensing videos, since the scene changes as we acquire measurements. Hence, requiring fewer measurements for each reconstructed frame of the video implies less error due to motion blur.

Using the estimates of the state sequence $\widehat{\bfx}_{1:T}$,
we can recover $C$ by solving the following convex problem:
\begin{equation}
\min \sum_{i=1}^d \| \Psi^T {\bf c}_i \|_1  \;\;\; \textrm{s.t.} \;\;\; \forall t, \, \| \bfz_t - \Phi_t  C \widehat{\bfx}_t \|_2 \le \epsilon,
\label{eqn:convex1}
\end{equation}
where ${\bf c}_i$ denotes the $i$-th column of $C$ and $\Psi$ is a sparsifying basis for the 
columns of $C$.
Note that, in (\ref{eqn:convex1}), we use {\em all} of the compressive measurements $\bfz_t$ obtained for each frame of the video --- that is, we use both the common and innovation measurements since the common measurement, much like the innovation measurements, are linear measurements of the frames.
Further, as we show later in Section \ref{sec:strucSpars}, ambiguities in the 
estimation of the state sequence induce a structured sparsity pattern in the support of $C$. 
The convex program (\ref{eqn:convex1}) can be modified to incorporate such constraints. In addition to this, in Section \ref{sec:obs}, we also propose a greedy alternative for solving a variant of the convex program.

\begin{figure*}[!ttt]
\centering
\includegraphics[width=\textwidth]{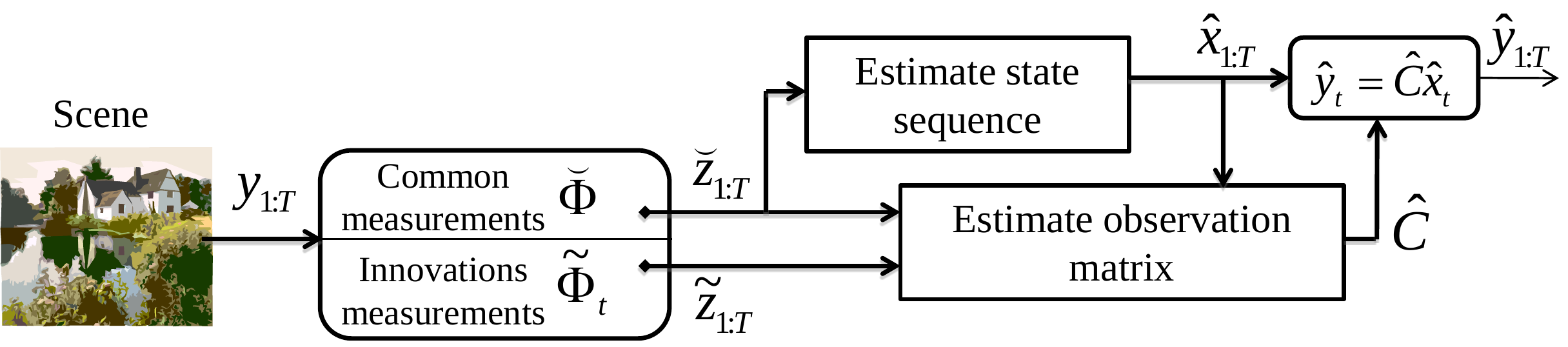}
\caption{{Block diagram of the CS-LDS framework.}}
\label{fig:outline}
\end{figure*}
 
To summarize, the two-step measurement process described in (\ref{eqn:meas01}) enables a two-step recovery (see Figure \ref{fig:outline}). First, we obtain an estimate of the state sequence using SVD on just the common measurements. Second, we use the state sequence estimate for recovering the observation matrix using a convex program. The details of these two steps are discussed in the next two sections.


\section{Estimating the state sequence} \label{sec:coeff}
In this section, we discuss methods to estimate the state sequence $\bfx_{1:T}$ from the compressive measurements $\wcheck{\bfz}_{1:T}$.
In particular, we seek to establish
sufficient conditions under which the state sequence can be estimated reliably.

\subsection{Observability of the state sequence}\label{sec:observe}
Consider the compressive measurements given by 
\begin{equation}
\wcheck{\bfz}_t = \wcheck{\Phi} y_t + \omega_t,
\label{eqn:common}
\end{equation}
where $\wcheck{\bfz}_t \in \reals^{\wcheck{M}}$ are the compressive measurements at time $t$, $\wcheck{\Phi} \in \reals^{\wcheck{M} \times N}$ is the corresponding measurement matrix,
and $\omega_t \in \reals^{\wcheck{M}}$ is the measurement noise.
Note that $\wcheck{\Phi}$ is time-invariant; hence, (\ref{eqn:common}) is a part of the 
measurement model described in (\ref{eqn:meas01}) relating to the common measurements.
A key observation is that, when $\bfy_{1:T}$ form the observations of an LDS defined by $(C, A)$,  the compressive measurement sequence $\wcheck{\bfz}_{1:T}$ forms an LDS as well; that is,
\begin{eqnarray*}
\wcheck{\bfz}_t &=&  \wcheck{\Phi} y_t + \omega  =  \wcheck{\Phi} C \bfx_t + \omega'_t,\\
\bfx_t 		  &=& A \bfx_{t-1} + w_t.
\end{eqnarray*}		
The LDS  associated with  $\wcheck{\bfz}_{1:T}$ is parameterized by the system matrices $( \wcheck{\Phi} C, A )$.
Estimating the state sequence from the observations of an LDS is possible only when the LDS is \emph{observable} \cite{brockett1970finite}. Thus, it is important to consider the question of observability of the LDS parameterized by  $( \wcheck{\Phi} C, A )$.\footnote{Observability of LDSs in the context of CS has been studied earlier by Wakin et al.\ \cite{wakin2010observability}, who consider the scenario when the observation matrix $C$ is randomly generated and the state vector $\bfx_0$ at $t=0$ is sparse. In contrast, the analysis we present is for a non-sparse state vector.}

\begin{dfn}[Observability of an LDS \cite{brockett1970finite}]
An LDS is \emph{observable} if, for any possible state sequence, the current state can be estimated from a finite number of observations.
\end{dfn}

\begin{lma}[Test for observability of an LDS \cite{brockett1970finite}]
An LDS defined by the system matrices $(C, A)$ and of state space dimension $d$ is observable if and only if the observability matrix
\begin{equation}
(O(C, A))^T = \left[  C^T \, (CA)^T \, (CA^2)^T \, \cdots \, (CA^{d-1})^T  \right]^T
\label{eqn:obsMat}
\end{equation}
is full rank.
\end{lma}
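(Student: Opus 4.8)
The plan is to reduce the observability question to a linear-algebraic rank condition on a finite block of observations, and then use the Cayley--Hamilton theorem to show that $d$ blocks already suffice. Working with the noiseless model $\bfy_t = C\bfx_t$, $\bfx_{t+1} = A\bfx_t$, iterating gives $\bfx_t = A^t \bfx_0$ and hence $\bfy_t = CA^t\bfx_0$. Stacking the first $k$ observations therefore yields $[\bfy_0^T \ \bfy_1^T \ \cdots \ \bfy_{k-1}^T]^T = O_k \bfx_0$, where $O_k = [C^T \ (CA)^T \ \cdots \ (CA^{k-1})^T]^T$; in particular $O_d = O(C,A)$ in the notation of (\ref{eqn:obsMat}). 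Since the observation sequence depends on the initial state only through $O_k\bfx_0$, two initial states produce identical observations over a window of length $k$ if and only if their difference lies in $\ker O_k$. Thus the state is determined by $k$ observations exactly when $O_k$ has trivial kernel, i.e.\ full column rank $d$.

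Next I would argue that it is enough to test $k=d$. The Cayley--Hamilton theorem says $A$ satisfies its own characteristic polynomial, so $A^d$ --- and inductively $A^m$ for every $m \ge d$ --- is a linear combination of $I, A, \ldots, A^{d-1}$. Consequently every block row $CA^m$ with $m \ge d$ is a linear combination of the block rows of $O(C,A)$, so $\mathrm{rank}(O_k) = \mathrm{rank}(O(C,A))$ and $\ker O_k = \ker O(C,A)$ for all $k \ge d$. Hence if $O(C,A)$ is full rank, any $d$ consecutive observations pin down the state (e.g.\ via the pseudo-inverse $\bfx_0 = O(C,A)^\dagger [\bfy_0^T \cdots \bfy_{d-1}^T]^T$), and propagating by $A$ then recovers the current state, so the LDS is observable. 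Conversely, if $O(C,A)$ is rank-deficient, pick $0 \ne \bfx_0 \in \ker O(C,A) = \ker O_k$ for all $k$; then $\bfy_t = CA^t\bfx_0 = 0$ for every $t$, so this trajectory is indistinguishable from the zero trajectory no matter how many observations are collected, and the LDS is not observable.

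The routine parts are the iteration $\bfy_t = CA^t\bfx_0$ and the pseudo-inverse inversion once full rank is known; I would cite Cayley--Hamilton as a black box rather than prove it. The one step carrying the real content is the Cayley--Hamilton truncation: it is what converts the phrase ``a finite number of observations'' in the definition into the specific, checkable $Nd \times d$ object $O(C,A)$. A minor point worth a sentence is the presence of process and measurement noise in (\ref{eqn:markov_process}): observability as defined is a structural property of the pair $(C,A)$, so one reasons in the noiseless setting and notes that the rank condition is exactly what makes the deterministic part of the state identifiable (and, in the stochastic setting, makes the associated observability Gramian / information matrix nonsingular).
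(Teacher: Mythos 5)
The paper does not actually prove this lemma; it is stated as a classical result and attributed to Brockett \cite{brockett1970finite}. Your argument is the standard and correct textbook proof: reducing observability to the injectivity of the stacked map $\bfx_0 \mapsto O_k\bfx_0$, invoking Cayley--Hamilton to show that $\ker O_k = \ker O_d$ for all $k \ge d$ (so ``a finite number of observations'' can be replaced by exactly $d$), and exhibiting a nonzero kernel element as an unobservable trajectory for the converse. Your closing remark is also the right one --- observability here is a structural property of the pair $(C,A)$, so the noise terms in the model play no role in the rank test. No gaps.
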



A necessary condition for the observability of the LDS defined by $(\wcheck{\Phi}C, A)$ is that the LDS defined by $(C, A)$ is observable.
However, for the LDSs we consider in this paper, $N \gg d$; for such systems, the LDS defined by $(C, A)$ is observable. Given this assumption, we consider the observability of the LDS parameterized by  $( \wcheck{\Phi} C, A )$ next.

\begin{lma}
For $N > d$, the LDS defined by $(\wcheck{\Phi}C, A)$ is observable, with high probability, if $\wcheck{M} \ge d$ and the entries of the matrix $\wcheck{\Phi}$ are sampled i.i.d.  from a sub-Gaussian distribution.
\end{lma}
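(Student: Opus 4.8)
The plan is to reduce observability of the LDS $(\wcheck{\Phi}C, A)$ to a rank condition on its observability matrix via the stated test for observability, and then show that rank condition holds with high probability when $\wcheck{\Phi}$ is sub-Gaussian. By the test for observability, $(\wcheck{\Phi}C, A)$ is observable if and only if the stacked matrix $O(\wcheck{\Phi}C, A)$, whose block rows are $\wcheck{\Phi}C, \wcheck{\Phi}CA, \ldots, \wcheck{\Phi}CA^{d-1}$, has rank $d$. Note that this matrix factors as
\begin{equation}
O(\wcheck{\Phi}C, A) = (I_d \otimes \wcheck{\Phi}) \, O(C, A),
\label{eqn:factorObs}
\end{equation}
where $O(C,A)$ is the $Nd \times d$ observability matrix of the original system. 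Since we have assumed $(C,A)$ is observable (which holds generically for $N \gg d$), $O(C,A)$ has rank $d$; equivalently, there is a set of $d$ rows of $O(C,A)$ forming an invertible $d \times d$ submatrix. So it suffices to show that right-multiplying by a sub-Gaussian $\wcheck{\Phi}$ (block-diagonally, one copy per block row) preserves rank $d$ with high probability.

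First I would handle the simplest route: observe that observability of $(\wcheck{\Phi}C, A)$ is implied by observability of $(\wcheck{\Phi}C A^{d-1}$-free truncation$)$ — more directly, it is enough that $\wcheck{\Phi}C$ alone, together with $A$, gives a full-rank stack, and in fact the cleanest sufficient condition is that $\wcheck{\Phi}$ restricted to the column space of $O(C,A)$ (a $d$-dimensional subspace of $\reals^N$, or really $d$ subspaces, one per block) is injective. Concretely, pick any $d$ linearly independent columns spanning $\mathrm{range}(C) \subseteq \reals^N$; if $\wcheck{M} \ge d$ and $\wcheck{\Phi}$ has i.i.d. sub-Gaussian entries, then $\wcheck{\Phi}$ restricted to this fixed $d$-dimensional subspace is almost surely injective — a random $\wcheck{M}\times N$ sub-Gaussian matrix maps any fixed $d$-dimensional subspace injectively to $\reals^{\wcheck{M}}$ with probability one when $\wcheck{M}\ge d$, since the event of rank deficiency is the vanishing of a nontrivial polynomial (a determinant) in the entries, which has Lebesgue measure zero and, for sub-Gaussian continuous distributions, probability zero. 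Then $\wcheck{\Phi}C$ has rank $d$, hence $O(\wcheck{\Phi}C,A)$ has rank $d$ because its first block row already does. I would phrase the "with high probability" as "with probability one" if the sub-Gaussian distribution is continuous, and otherwise invoke a small-ball / anti-concentration argument to get the high-probability statement for, e.g., Rademacher entries.

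The main obstacle — and the place I would spend the most care — is the discrete case (e.g. the $0/1$ entries used by the SPC), where "measure zero" does not immediately give "probability zero." There, the cleanest argument is: fix the invertible $d\times d$ submatrix $R$ of $O(C,A)$ witnessing observability, so that $O(C,A) = P R$ for some $Nd \times d$ matrix $P$ that includes $d$ rows equal to a permutation of $I_d$; it suffices to show $(I_d\otimes\wcheck{\Phi})P$ has rank $d$. Restricting attention to the first block, this reduces to showing that the $\wcheck{M}\times d$ matrix $\wcheck{\Phi}U$ has rank $d$, where $U\in\reals^{N\times d}$ has orthonormal columns spanning $\mathrm{range}(C)$. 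For $\wcheck{M}\ge d$ and sub-Gaussian $\wcheck{\Phi}$, $\wcheck{\Phi}U$ is itself a $\wcheck{M}\times d$ matrix whose rows are i.i.d. isotropic sub-Gaussian vectors (isotropy up to the variance scaling inherited from $U$ being orthonormal), and standard non-asymptotic random matrix theory — e.g. the smallest-singular-value bounds of the type underlying the RIP argument already cited in the paper — gives $\sigma_{\min}(\wcheck{\Phi}U) > 0$ with probability at least $1 - e^{-c\wcheck{M}}$, in fact with room to spare when $\wcheck{M}$ is a modest multiple of $d$. I would present the continuous case as the short proof and append the discrete/anti-concentration refinement to justify the "with high probability" qualifier precisely as stated.
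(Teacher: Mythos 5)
Your proposal follows essentially the same route as the paper: both reduce observability to the single condition $\textrm{rank}(\wcheck{\Phi}C)=d$ (so that the first block row of the observability matrix already has full column rank), i.e., to the injectivity of the random map $\wcheck{\Phi}$ on the $d$-dimensional range of $C$, which holds with high probability for i.i.d.\ sub-Gaussian entries when $\wcheck{M}\ge d$. Your write-up is in fact somewhat more careful than the paper's (which fixes a single null vector $\alpha$ and argues only that one row annihilates it with negligible probability), since you address uniformity over the null space via a measure-zero and smallest-singular-value argument, but the underlying idea is identical.
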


\begin{proof}
This is established by proving that $\rank(\wcheck{\Phi}C) = d$ when $\wcheck{M} \ge d$.
Assume that $\rank(\wcheck{\Phi}C) < d$, i.e., $\exists \, \alpha \in \reals^d$
such that $\wcheck{\Phi}C\alpha = 0, \alpha \ne 0$.
Let $\phi^T$ be a row of $\wcheck{\Phi}$.
The event that $\phi^T C \alpha = 0$ is one of negligible probability when 
the elements of $\phi$ are assumed to be i.i.d. according to a sub-Gaussian distribution such as Gaussian or Bernoulli.
Hence, with high probability $\rank(\wcheck{\Phi} C) = d$ when $\wcheck{M} \ge d$.
\hfill
\end{proof}

Observability is the key criterion for recovering the state sequence from the common measurements. When the LDS associated with the common measurements is observable, we can
estimate the state sequence --- up to a linear transformation --- by factorizing the block Hankel matrix  $\textrm{Hank}(\wcheck{\bfz}_{1:T}, d)$ in (\ref{eqn:hankelmat}).
$\textrm{Hank}(\wcheck{\bfz}_{1:T}, d)$ can be written as
$$\textrm{Hank}(\wcheck{\bfz}_{1:T}, d)  =
O( \wcheck{\Phi} C, A) [ \bfx_1 \; \bfx_2 \; \cdots \; \bfx_{T-d+1} ].$$
Hence, when the observability matrix $O( \wcheck{\Phi} C, A)$ is full rank, we can recover the state sequence by factoring the Hankel matrix using the $\svd$. 
Suppose the SVD of the Hankel matrix is $\textrm{Hank}(\wcheck{\bfz}_{1:T}, d) = U S V^T$.
Then,  the estimate of the state sequence is obtained by 
\begin{equation}
[\widehat{\bfx}_{1:T-d+1}] = S_{d} V_d^T,
\label{eqn:stateEst}
\end{equation}
where $S_d$ is the diagonal matrix containing the $d$-largest singular values in $S$, and $V_d$ is the matrix composed of the right singular vectors corresponding to these singular values.
The estimate of the state sequence obtained from $\svd$ differs from its true value by a linear transformation. This is a fundamental ambiguity that stems from the lack of uniqueness in the definition of the state space (see Section \ref{sec:priorlds}).
The state sequence estimate in (\ref{eqn:stateEst}) can be improved, especially for high levels of measurement noise, by using system identification techniques mentioned in Section \ref{sec:priorlds}. However, the simplicity of this estimate makes it amenable for further analysis.

{\sloppy 
When $\wcheck{M} > d$, we can choose to factorize a smaller-sized Hankel matrix $\textrm{Hank}(\wcheck{\bfz}_{1:T}, q)$ provided $q >  d/\wcheck{M} $. Note that when $q = 1$, we do not enforce the constraints provided by the state transition model, thereby simply reducing the LDS to a linear system.
For $q > 1$, we enforce the state transition model over $q$ successive time instants; i.e., we enforce
$$\bfx_t = A \bfx_{t-1} = A^2 \bfx_{t-2} = \cdots = A^{q-1} \bfx_{t-q+1}, \,\,\, q \le t \le T.$$
Larger values of $q$ lead to smoother state sequences, since the estimates conform to the state transition model for longer durations.}

We next study the observability properties of specific classes of interesting LDSs and the conditions on $\wcheck{\Phi}$ under which the observability of $(\wcheck{\Phi} C, A)$ holds.

\subsection{Case: $\wcheck{M}=1$} \label{sec:one}
A particularly interesting scenario is when we obtain exactly one common measurement for each video frame. 
For such a scenario, $\wcheck{M}=1$ and, hence, the measurement matrix can be written as a row-vector: $\wcheck{\Phi} = {\phi}^T \in \reals^{1 \times N}$.
We now establish conditions when the observability matrix $O( {\phi}^T C, A)$ is full rank for this particular 
scenario. Let $\wcheck{\bf c} =   ({\phi}^T C)^T = C^T {\phi}$ and $B = A^T$. We seek a condition when the observability matrix, or equivalently its transpose,
\begin{equation}
\left( O(\wcheck{\bf c}^T, B^T ) \right)^T =  \left[ \wcheck{\bf c} \;\; B\wcheck{\bf c} \;\; B^2 \wcheck{\bf c} \;\; \cdots \;\; B^{d-1} \wcheck{\bf c} \right]
\label{eqn:kry}
\end{equation}
is full rank.\footnote{There is an interesting connection to Krylov-subspace methods here. \
In Krylov-subspace methods,  a low-rank approximation to a matrix $K$ is obtained by forming the matrix $[ {\bf c} \;\; K{\bf c} \;\; K^2{\bf c} \; \cdots ]$ with $\bf c$ randomly chosen.
Convergence proofs for this method are closely related to Theorem \ref{thm:single}. To the best of our knowledge, diagonalizability of $K$ plays an important role in most of these proofs. The interested reader is referred to \cite{saad1981krylov} for more details.}
We concentrate on the specific scenario where the matrix $B$ (and hence, $A$) is diagonalizable, i.e., $B = Q \Lambda Q^{-1}$, where $Q \in \reals^{d \times d}$ is an invertible matrix (hence, full rank) and $\Lambda$ is a diagonal matrix with diagonal elements $\{ \lambda_i, \; 1 \le i \le d \}$.
For such matrices, the transpose of the observability matrix can be written as
$$
\begin{array}{rcl}
\left( O(\wcheck{\bf c}^T, B^T) \right)^T 
&=& \left[ \wcheck{\bf c} \;\; B\wcheck{\bf c} \;\; B^2 \wcheck{\bf c} \;\; \cdots \;\; B^{d-1} \wcheck{\bf c} \right] \\
&=& \left[ QQ^{-1} \wcheck{\bf c} \;\; Q \Lambda Q^{-1} \wcheck{\bf c} \;\; \cdots \;\; Q \Lambda^{d-1} Q^{-1} \wcheck{\bf c} \right] \\
&=& Q \left[ {\bf e} \;\; \Lambda {\bf e} \;\; \Lambda^2 {\bf e} \;\; \cdots \;\; \Lambda^{d-1} {\bf e} \right], \\
\end{array}$$
where ${\bf e} = Q^{-1} \wcheck{\bf c}$.
This can be expanded as
$$ Q \left[
\begin{array}{ccccc}
e_1 & \lambda_1 e_1 & \lambda_1^2 e_1 & \cdots & \lambda_1^{d-1} e_1 \\
e_2 & \lambda_2 e_2 & \lambda_2^2 e_2 & \cdots & \lambda_2^{d-1} e_2\\
\vdots & \vdots & \vdots & \cdots & \vdots \\
e_d & \lambda_d e_d & \lambda_d^2 e_d & \cdots & \lambda_d^{d-1} e_d\\
\end{array}
\right]$$
and further into
$$ Q \left[
\begin{array}{cccc}
e_1 & & & 0\\
 & e_2 & & \\
 & & \ddots & \\
 0 & & & e_d
  \end{array}
\right]
\left[
\begin{array}{ccccc}
1 & \lambda_1 & \lambda_1^2 & \cdots & \lambda_1^{d-1} \\
1 & \lambda_2 & \lambda_2^2 & \cdots & \lambda_2^{d-1} \\
\vdots & \vdots & \vdots & \cdots & \vdots \\
1 & \lambda_d & \lambda_d^2 & \cdots & \lambda_d^{d-1} \\
\end{array}
\right].
$$

We can establish a sufficient condition for when the observability matrix is full rank.

\begin{thm} \label{thm:single}
Let  $\wcheck{M} = 1$ and let the elements of $\wcheck{\Phi} = {\bf \phi}^T$ be i.i.d. from a sub-Gaussian distribution. Then, with high probability, the observability matrix is full rank  when the state transition matrix is diagonalizable and its eigenvectors and eigenvalues are unique.
\end{thm}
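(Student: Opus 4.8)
The derivation preceding the theorem statement already exhibits the transpose of the observability matrix as a product of three $d\times d$ matrices,
$$
\left( O(\wcheck{\bf c}^T, B^T) \right)^T \;=\; Q\, \mathrm{diag}(e_1,\ldots,e_d)\, V_\Lambda,
$$
where $\wcheck{\bf c}=C^T\phi$, the vector ${\bf e}=(e_1,\ldots,e_d)^T=Q^{-1}\wcheck{\bf c}=Q^{-1}C^T\phi$, and $V_\Lambda$ is the Vandermonde matrix whose $i$-th row is $(1,\lambda_i,\lambda_i^2,\ldots,\lambda_i^{d-1})$. My plan is to argue that the two outer factors are invertible for purely deterministic reasons, so that the product is full rank if and only if the middle diagonal factor is, i.e.\ if and only if $e_i\neq 0$ for every $i$, and then to show that this last event has high probability over $\phi$.

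For the deterministic step: since $A$, and hence $B=A^T$, is assumed diagonalizable, $Q$ is invertible by construction; and since the eigenvalues $\{\lambda_i\}$ are assumed distinct, $\det V_\Lambda=\prod_{i<j}(\lambda_j-\lambda_i)\neq 0$, so $V_\Lambda$ is invertible as well. Consequently the rank of $O(\wcheck{\bf c}^T,B^T)$ equals the rank of $\mathrm{diag}(e_1,\ldots,e_d)$, and the observability matrix is full rank precisely when all $e_i\neq 0$.

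For the probabilistic step, write $M=Q^{-1}C^T\in\reals^{d\times N}$, so that ${\bf e}=M\phi$ and $e_i=m_i^T\phi$, with $m_i^T$ the $i$-th row of $M$. First I would check that each $m_i\neq 0$: the $i$-th row of $Q^{-1}$ is a left eigenvector of $B=A^T$, equivalently a right eigenvector ${\bf q}_i$ of $A$ with eigenvalue $\lambda_i$, so $m_i=C{\bf q}_i$; because $(C,A)$ is assumed observable throughout this section (which holds for the systems of interest, where $N\gg d$), the Popov--Belevitch--Hautus test gives $C{\bf q}_i\neq 0$, i.e.\ $m_i\neq 0$ (alternatively, $m_i=C{\bf q}_i\neq 0$ whenever $C$ has full column rank, since ${\bf q}_i\neq 0$). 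For a fixed nonzero $m_i$, the event $\{m_i^T\phi=0\}$ forces $\phi$ onto a fixed hyperplane through the origin, which has probability zero for absolutely continuous sub-Gaussian laws (e.g.\ Gaussian) and negligible probability for discrete ones (e.g.\ Bernoulli), exactly as argued for the earlier observability lemma. A union bound over $i=1,\ldots,d$ then yields $e_i\neq 0$ for all $i$ with high probability, which is the claim.

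The one step requiring genuine care is this last point for discrete distributions: converting ``negligible probability'' into a quantitative high-probability statement needs an anti-concentration (Littlewood--Offord-type) estimate, whose strength depends on how spread out the entries of each $m_i=C{\bf q}_i$ are. For the qualitative statement of the theorem --- and in particular the probability-one conclusion in the Gaussian case --- this reduces to the routine observation that finitely many fixed hyperplanes through the origin are almost surely avoided by a random vector with i.i.d.\ sub-Gaussian entries.
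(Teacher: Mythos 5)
Your proposal is correct and follows essentially the same route as the paper: factor the transpose of the observability matrix as $Q\,\mathrm{diag}(e_1,\ldots,e_d)\,V_\Lambda$, note that $Q$ and the Vandermonde factor are invertible under the stated hypotheses, and argue that each $e_i$ vanishes only with negligible probability over $\phi$. You are in fact slightly more careful than the paper in verifying that each row $m_i = C{\bf q}_i$ of $Q^{-1}C^T$ is nonzero (via observability of $(C,A)$) and in flagging the anti-concentration issue for discrete sub-Gaussian entries, both of which the paper's proof passes over implicitly.
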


\begin{proof}
From the discussion above, the observability matrix can be written as a product of three square matrices: $Q$, the matrix of eigenvectors of $A^T$; a diagonal matrix with entries defined by the vector ${\bf e} = Q^{-1}C^T {\bf \phi}$; and a Vandermonde matrix defined by the vector of eigenvalues of $A$ $\{\lambda_i, \; 1 \le i \le d \}$.
When the eigenvectors and eigenvalues are distinct, the first and last matrices are full rank.
Given that the elements of ${\bf \phi}$ are i.i.d., the probability that $e_i = 0$ is negligible and, hence, the diagonal matrix is full rank with high probability.
Since the product of full rank square matrices is full rank as well, this implies that the observability matrix is full rank with high probability.
\hfill
\end{proof}

{ {\bf Remark: }} Theorem \ref{thm:single} requires that the state-transition matrix be full-rank (non-zero Eigenvalues) and be diagonalizable with unique Eigenvalues. Most matrices are diagonalizable (once, we allow complex Eigenvalues) and hence, the requirement that state transition matrix be diagonalizable is not restrictive. A more restrictive condition is requiring the Eigenvalues of the matrix to be unique. Unfortunately, this eliminates some commonly observed state transition matrix such as the Identity matrix --- which is coupled with Brownian processes. 
Nonetheless, Theorem \ref{thm:single} is intriguing, since it guarantees recovery of the state sequence even when we
obtain only \emph{one} common measurement per time instant. 
This is immensely useful in reducing the number of measurements 
required to sense a video sequence.


Interestingly, we can reduce $\wcheck{M}$ even further. 
This is achieved by not obtaining common measurements at some time instants. 

\subsection{Missing measurements: Case $\wcheck{M} < 1$} \label{sec:miss}
If we do not obtain common measurements at  some time instants, then is it still possible to obtain an estimate of the state sequence?
One way to view this problem is that we have incomplete knowledge of the Hankel matrix defined in (\ref{eqn:hankelmat}) and we seek to \emph{complete} this matrix.
Matrix completion, especially for low rank matrices, has received significant attention recently
 \cite{recht2007guaranteed, candes2009exact, candes2010power}.
 
 Given that the Hankel matrix $\textrm{Hank}(\wcheck{\bfz}_{1:T}, q)$ in (\ref{eqn:hankelmat})   is low rank for videos modeled as LDSs, we formulate the missing measurement recovery problem as one of matrix completion.
Suppose that we have the common measurements only at time instants given by the index set ${\mathscr I} \subset \{1, \ldots, T\}$, i.e., we have knowledge of $\{ \wcheck{\bfz}_i, \; i \in {\mathscr I} \}$.
We can recover the missing measurements by exploiting the low-rank property of $\textrm{Hank}(\wcheck{\bfz}_{1:T}, q)$.
Specifically, we solve the following problem to obtain the missing measurements:
$$ \min \textrm{rank}(\textrm{Hank}({\bf g}_{1:T}, q)) \; \; \; \textrm{s.t.} \; \; {\bf g}_i = \wcheck{\bfz}_i, i \in {\mathscr I}.$$
However, $\textrm{rank}(\cdot)$ is a non-convex function which renders the above problem  NP-complete.
In practice, we can solve a convex relaxation of this problem\footnote{Historically, the use of nuclear norm-based optimization for system identification goes back to Fazel et al. \cite{fazel2001rank, fazel2003log}.
Since then, there has been much work towards establishing the equivalence of these two problems \cite{recht2007guaranteed, candes2009exact}. Further, the convex program in (\ref{eqn:hankConv}) was used for video inpainting in \cite{ding2007rank}.}
\begin{equation}
 \min \|\textrm{Hank}({\bf g}_{1:T}, q) \|_* \; \; \; \textrm{s.t.} \; \; {\bf g}_i = \wcheck{\bfz}_i, i \in {\mathscr I},
 \label{eqn:hankConv}
 \end{equation}
where $\| H \|_{*}$ is the nuclear norm of the matrix $H$, which equals the sum of its singular values.
Once we fill in the missing measurements, we use (\ref{eqn:stateEst}) to recover an estimate of the state sequence.

An important quantity to characterize is the proportion of time instants in which we can choose to not obtain common measurements. 
This amounts to developing a sampling theorem for the completion of low-rank Hankel matrices; to the best of our knowledge, there has been little theoretical work on this problem.
Instead, we address it empirically in Section \ref{sec:experiments}.

\section{Estimating the observation matrix} \label{sec:obs}
In this section, we discuss estimation of the observation matrix $C$ given the estimates of the state space sequence $\widehat{\bfx}_{1:T}$.

\subsection{Need for innovation measurements}
Given estimates of the state sequence $\widehat{\bfx}_{1:T}$, the matrix $C$ is linear in the compressive measurements which enables  a host of conventional $\ell_2$-based methods as well as $\ell_1$-based recovery algorithms to estimate $C$. 
However, recall that the $C$ is a $N \times d$ matrix and, hence, the common measurements by themselves are not enough to recover $C$, unless $\wcheck{M}$ is large.

The common measurements $\wcheck{\bfz}_{1:T}$ used in the estimation of the state sequence are measured using a time-invariant measurement
matrix $\wcheck{\Phi}$. 
A time-invariant measurement matrix, by itself, is not sufficient for estimating $C$ unless $\wcheck{M}$ is very large.
To alleviate this problem, we take additional compressive measurements of each frame using a time-varying measurement matrix.
Let $\wtilde{\bfz}_t = \wtilde{\Phi}_t \bfy_t + \omega_t =  \wtilde{\Phi}_t C \bfx_t + \omega_t,$ where $\wtilde{\bfz}_t \in \reals^{\wtilde{M}}$ and $\wtilde{\Phi}_t \in \reals^{\wtilde{M} \times N}$ are the compressive measurements and the corresponding measurement matrix at time $t$. As mentioned earlier in Section \ref{sec:cslds}, we refer to these as  innovation measurements.
Noting that $C$ is a time-invariant parameter, we can collect innovation measurements over a period of time before reconstructing $C$. This enables a significant reduction in the number of measurements taken at each time instant. 

\subsection{Structured sparsity for $C$} \label{sec:strucSpars}
Individual frames of a video, being images, exhibit sparsity/compressibility in a certain transform bases such as wavelets and DCT. If the support of the frames are highly overlapping --- this is to be expected given the redundancies in a video --- then columns of $C$ are compressible in the same transform bases; a consequence of $C$ being a basis for the frames of the video.
Further, note that the columns of $C$ are also the top principal components and hence, capture the dominant motion patterns in the scene; when motion in the scene is spatially correlated, the columns of $C$ are compressible in wavelet/DCT basis.
For these reasons, we assume that the columns of $C$ are {\em compressible} in a wavelet/DCT basis and employ sparse priors in the recovery of the observation matrix $C$. 
We can potentially obtain an estimate of $C$ by solving the following convex program:
\begin{equation}
(P_{\ell_1}) \;\;\; \min \sum_{i=1}^d \| \Psi^T {\bf c}_i \|_1 \;\; \textrm{s.t} \;\;\; \forall t, \| \bfz_t - \Phi_t C \widehat{\bfx}_t \|_2 \le \epsilon.
\label{eqn:l1naive}
\end{equation}
 Here, we denote the columns of the matrix $C$ as ${\bf c}_i, i=1,\ldots, d$. $\Psi$ is a sparsifying basis for the columns of $C$; we have the freedom to choose different sparsifying bases for different columns of $C$.
 
 The assumption of compressibility in a transform basis was sufficient  for all the videos we test on (see Section \ref{sec:experiments}).
 However, it is entirely possible that a video is not compressible in a transform basis. There are two possible ways to address such a scenario.
First, given training data, we can use dictionary learning algorithms \cite{kreutz2003dictionary}  to learn an appropriate basis where in the columns of $C$ are sparse/compressible.
Second, in the absence of training data, we revert to $\ell_2$-based methods to recover $C$; in such cases, we would typically need more measurements to recover $C$.

However, the convex program $(P_{\ell_1})$ is not sufficient as-is to recover $C$. The reason for this stems from ambiguities in the definition of the LDS (see Section \ref{sec:priorlds}).
The use of $\svd$ for recovering the state sequence introduces an ambiguity in the estimates of the state sequence in the form of $[\widehat{\bfx}_{1:T}] \approx L^{-1} [{\bfx}_{1:T}]$, where $L$ is an invertible $d \times d$ matrix.
As a consequence, this will lead to an estimate $\widehat{C} = CL$ satisfying $\bfz = \Phi \widehat{C} \widehat{\bfx}_t = \Phi (CL)(L^{-1} \bfx_t) = \Phi C \bfx_t$.
Suppose the columns of $C$ are $K$-sparse (equivalently, compressible for a certain value of $K$) each in $\Psi$ with support ${\mathcal S}_k$ for the $k$-th column. Then, the columns of $CL$ are potentially $dK$-sparse with identical supports ${\mathcal S} = \bigcup_k {\mathcal S}_k$.
The support is exactly $dK$-sparse when the ${\mathcal S}_k$ are disjoint and $L$ is dense.
At first glance, this seems to be a significant drawback, since the overall sparsity of $\widehat{C}$ has increased to $d^2K$ (the sparsity of $C$ is $dK$).
However, this apparent increase in sparsity is alleviated by the columns having identical supports,
which can be exploited in the recovery process \cite{duarte2013measurement}.

Given the estimates $\widehat{\bfx}_{1:T}$, we estimate the matrix $C$ by solving the following convex program:
\begin{equation}
(P_{\ell_2-\ell_1}) \,\, \min \sum_{i=1}^N \| \bfs_i \|_2 \;\;\; \textrm{ s.t } C = \Psi S, \; \forall t,  \| {\bfz}_t - {\Phi}_t C \widehat{\bfx}_t \|_2 \le \epsilon,
\label{eqn:probP1}
\end{equation}
where $\bfs_i$ is the $i$-th row of the matrix $S = \Psi^T C$ and $\Psi$ is a sparsifying basis for the columns of $C$.
The above problem is an instance of an $\ell_2-\ell_1$ mixed-norm optimization that promotes group sparsity; in this instance, we use it to promote group column sparsity in the matrix $S$, i.e., all columns have the same sparsity pattern.

There are multiple efficient ways to solve $(P_{\ell_2-\ell_1}),$ including  solvers such as {\em SPG-L1} \cite{berg2008probing}  and model-based CoSAMP \cite{baraniuk2008model}.
Algorithm \ref{alg:cosamp} summarizes a model-based CoSAMP algorithm used for recovering the observation matrix $C$.
The specific model used here is a union-of-subspaces model that groups each row of $S = \Psi^T C$ into a single subspace/model.


\begin{algorithm}[!hhh]
\caption{
$\widehat{C} = $ {Model-based CoSAMP} $(\Psi, K, \bfz_t, \widehat{\bfx}_t, \Phi_t, t=1,\ldots, T)$ }

{ Notation:}\\
\hspace{5mm} $\textrm{supp}( vec; K )$ returns the support of $K$ largest elements of $vec$\\
\hspace{5mm} $A_{|\Omega, \cdot}$ represents the submatrix of $A$ with rows indexed by $\Omega$ and all columns.\\
\hspace{5mm} $A_{|\cdot, \Omega}$ represents the submatrix of $A$ with columns indexed by $\Omega$ and all rows.\\

{ Initialization} \\
\hspace{5mm} $\forall t, \Theta_t  \leftarrow \Phi_t \Psi$\\
\hspace{5mm}$\forall t, \bfv_t \leftarrow {\bf 0} \in {\mathbb R}^{M}$ \\
\hspace{5mm} $\Omega_{\textrm{old}} \leftarrow \phi$ \\

\While{(stopping conditions are not met) }{
 Compute signal proxy: \\
\hspace{5mm} $R = \sum_{t} \Theta_t^T \bfv_t \widehat{\bfx}_t^T$\\
Compute energy in each row:\\
\hspace{5mm} $k \in [1, \ldots, N], \bfr(k) = \sum_{i=1}^d R^2(k, i)$\\
Support identification and merger:\\
\hspace{5mm} $\Omega \leftarrow \Omega_\textrm{old} \bigcup \textrm{supp}(\bfr; {2K})$ \\
Least squares estimation:\\
\hspace{5mm} Find $A \in {\mathbb R}^{|\Omega| \times d}$ that minimizes $\sum_t \| \bfz_t - (\Theta_t)_{|\cdot, \Omega} A \widehat{\bfx}_t \|_2 $\\
\hspace{5mm} $B_{|\Omega, \cdot} \leftarrow A, \;B_{|\Omega^c, \cdot } \leftarrow 0$ \\
Pruning support:\\
\hspace{5mm} $k \in [1, \ldots, N], {\bf b}(k) = \sum_{i=1}^d B^2(k, i)$\\
\hspace{5mm} $\Omega \leftarrow \textrm{supp}( {\bf b}; K ), \; S_{|\Omega, \cdot} \leftarrow B_{|\Omega, \cdot}, \; S_{|\Omega^c, \cdot } \leftarrow 0$ \\
Form new estimate of $C$:\\
\hspace{5mm} $\widehat{C} \leftarrow \Psi S$\\
Update residue:\\
\hspace{5mm} $\forall t, \bfv_t \leftarrow \bfz_t - \Theta_t S \widehat{\bfx}_t$\\
\hspace{5mm} $\Omega_\textrm{old} \leftarrow \Omega$ \\
}
\label{alg:cosamp}
\end{algorithm}

\subsection{Value of $\wtilde{M}$} \label{sec:numbah}
For stable recovery of the observation matrix $C$, we need in total $O(dK \log(N/K))$ measurements; for a large class of practical solvers, a rule of thumb is $4 dK \log(N/K)$. Given that we measure $\wtilde{M}$ time-varying compressive measurements at each time instant, over a period of $T$ time instants, we have $\wtilde{M}T$ compressive measurements for estimating $C$. Hence, for stable recovery of $C$, we need approximately
\begin{equation}
\wtilde{M}T = 4 dK \log(N/K) \implies \wtilde{M} = 4 \frac{dK}{T}\log(N/K).
\label{eqn:measBounds}
\end{equation}
This indicates extremely favorable operating scenarios for the CS-LDS framework, especially when $T$ is large (as in high frame rate capture).
Let $T = \tau f_s,$ where $\tau$ is the time duration of the video in seconds and $f_s$ is the sampling rate of the measurement device. 
The number of compressive measurements required in this case is $\wtilde{M} = 4 \frac{d K}{\tau f_s } \log \left(\frac{N}{K} \right)$.
Given that the complexity of the LDS typically (however, not always) depends on $\tau$, for a fixed $\tau$ the number of measurements required to estimate $C$ decreases as $1/f_s$ as the sampling rate $f_s$ is increased.
Indeed, as the sampling rate $f_s$  increases, $\widetilde{M}$ can be decreased while keeping $Mf_s$ constant. This will ensure that (\ref{eqn:measBounds}) is satisfied, enabling stable recovery of $C$. 

\subsection{Mean + LDS} \label{sec:meanLDS}
In many instances, a dynamical scene is modeled better as an LDS over a static background, that is, $\bfy_t = C \bfx_t + {\bf \mu}$. This can be handled with two small modifications to the Algorithm \ref{alg:cosamp}. First, the state sequence $[ \hat{\bfx}_{1:T}]$ is obtained by performing an SVD on the matrix $\textrm{Hank}(\widecheck{z}_{1:T}, d_\textrm{guess})$ modified such that each row sums to zero. This works under the assumption that the sample mean of $\widecheck{z}_{1:T}$ is equal to $\widecheck{\Phi} {\bf \mu}$, the compressive measurement of ${\bf \mu}$. Second, given that the support of ${\bf \mu}$ need not be similar to that of $C$,  the resulting optimization problem can be reformulated as
\begin{equation}
(P_{\mu, \ell_2-\ell_1}) \,\, \min \| \Psi^T \mu \|_1+\sum_{i=1}^N \| \bfs_i \|_2 \;\;\; \textrm{ s.t } C = \Psi S, \; \forall t,  \| \wtilde{\bfz}_t -  \wtilde{\Phi}_t (\mu +  C \widehat{\bfx}_t) \|_2 \le \epsilon.
\label{eqn:probP1mu}
\end{equation}
As with the convex formulation, the model-based CoSAMP algorithm described in Algorithm~\ref{alg:cosamp} can be modified to incorporate the mean term ${\bf \mu}$; an additional
modification here is the requirement to specify a priori the sparsity of the mean $K_\mu = \| \Psi^T \mu \|_0$. 

\section{Experiments} \label{sec:experiments}
We present a range of experiments validating various aspects of the CS-LDS framework. 
We use permuted noiselets \cite{coifman2001noiselets} for the measurement matrices,
since they have a fast scalable implementation.
We use the term \emph{compression ratio} $N/M$ to denote the reduction in the number of measurements as compared to the Nyquist rate.
Finally, we use the reconstruction SNR  to evaluate the recovered videos.
Given the ground truth video $\bfy_{1:T}$ and a reconstruction $\widehat{\bfy}_{1:T}$, the reconstruction SNR in dB is defined by
\begin{equation}
10 \log_{10} \left( \frac{\sum_{t=1}^T \| \bfy_t \|_2^2}{\sum_{t=1}^T \| \bfy_t - \widehat{\bfy}_t \|_2^2} \right).
\label{eqn:snr_def}
\end{equation}

We compare CS-LDS against  {\em frame-by-frame} CS, where each frame of the video is recovered separately using conventional CS techniques.
We use the term {\em oracle LDS} when the parameters and video reconstruction are obtained by operating on the original data itself.
Oracle LDS estimates the parameters using a rank-$d$ approximation of the ground truth data. The reconstruction SNR of the oracle LDS gives an upper bound on the achievable SNR. Finally, the ambiguity in the observation matrix (due to non-uniqueness of the SVD based factorization) as estimated by oracle LDS and CS-LDS is resolved by finding the best $d \times d$ linear transformation that registers the two estimates.

\begin{figure*}[!ttt]
\center
\begin{tabular}{cc}
\includegraphics[width=0.45\textwidth]{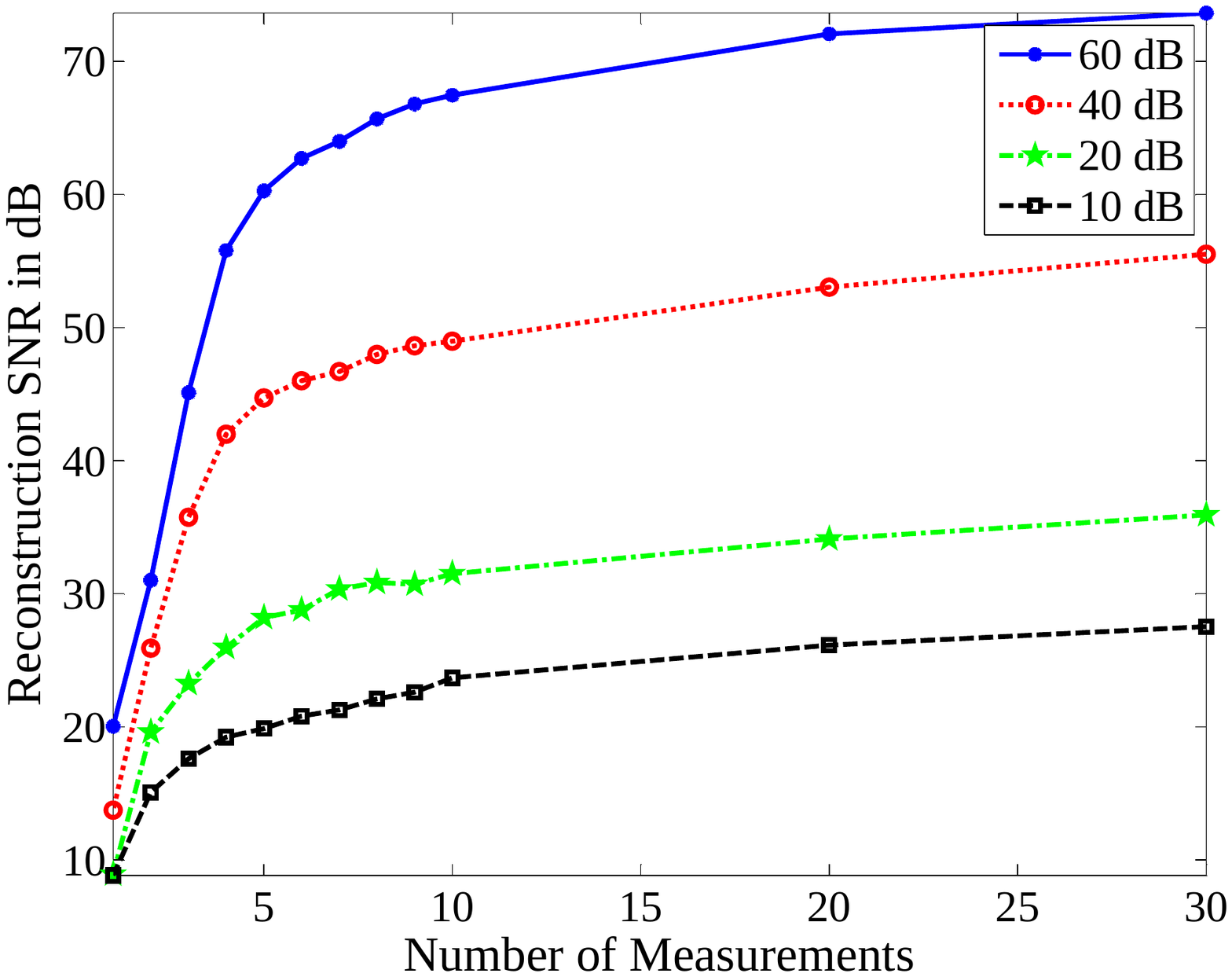} &
\includegraphics[width=0.45\textwidth]{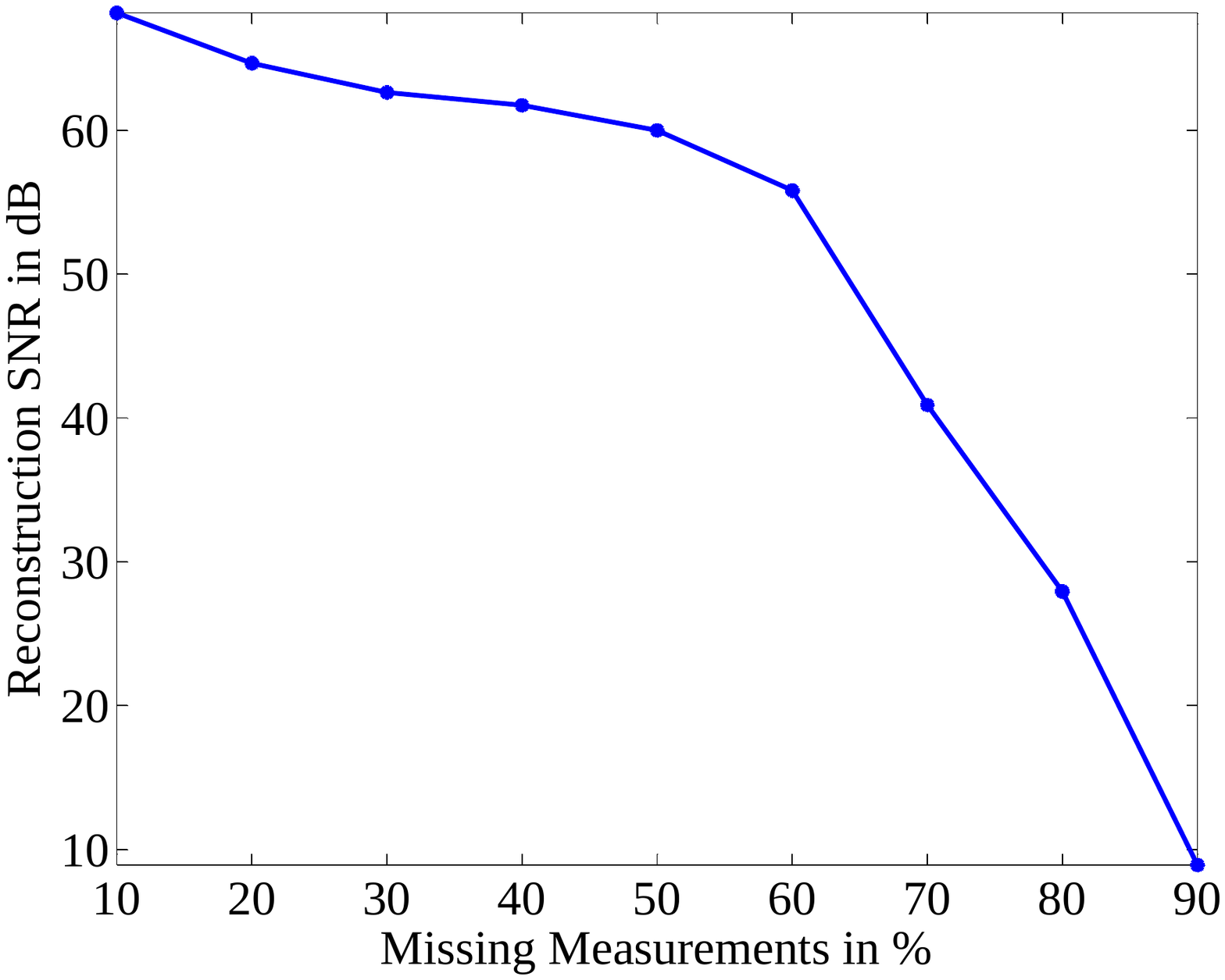} \\
(a)  $\wcheck{M} \ge 1$ & (b) $\wcheck{M} < 1$ \\
\end{tabular}
\caption{Accuracy of state sequence estimation from common measurements. Shown are aggregate results over $100$ Monte-Carlo runs for an LDS with $d=10$ and $T=500$. For each Monte-Carlo run, the system matrices and the state sequence were generated randomly. (a) Reconstruction SNR as a function of the number of common measurements $\wcheck{M}$ per frame. Each curve is for a different level of measurement noise as measured using input SNR.
For low  noise levels, we obtain a good reconstruction SNR ( $ > $ 20 dB) even at $\wcheck{M} = 1$; this 
hints at very high compression ratios.
(b) Reconstruction SNR of the Hankel matrix for the scenario with missing common measurements.  We can estimate the Hankel matrix very accurately even at $80 \%$ missing measurements. This suggests  immense flexibility in the implementation of the CS-LDS system.}
\label{fig:state}
\end{figure*}

\subsection{State sequence estimation}
We first provide empirical verification of the results derived in Sections \ref{sec:observe} and \ref{sec:one}.
It is worth noting that, in the absence of noise, Theorem \ref{thm:single} suggests exact recovery of the state sequence. In practice, it is important to check the robustness of the estimate to measurement noise. 
Figure \ref{fig:state}(a) analyzes the performance of the state space estimation for different values of the number of common measurements $\wcheck{M}$ and different SNRs of the measurement noise. We define input SNR in dB as $10 \log_{10} \left( (\sum \|\bfy_t \|_2^2)/(T \sigma^2) \right)$, where $\sigma$ is the standard deviation of the noise.
Here, we consider the scenario when $\wcheck{M} \ge 1$.
The underlying state space dimension is $d=10$ with $T=500$ frames.
As expected, for low SNRs, the reconstruction SNR is very high even for small values of $\wcheck{M}$. In addition to this, the accuracy at $\wcheck{M}=1$ is acceptable, especially at low SNRs.

Next, we validate the implications of Section \ref{sec:miss}, where we discuss the scenario of $\wcheck{M} < 1$ by simulating various proportions of missing common measurements. Figure \ref{fig:state}(b) shows reconstruction SNR for the Hankel matrix in (\ref{eqn:hankelmat}) for varying amounts of missing measurements. 
We recover the Hankel matrix by solving (\ref{eqn:hankConv}) using CVX \cite{grant2011cvx}.
Figure \ref{fig:state}(b) demonstrates a very high reconstruction SNR even at a very high rate of missing measurements. As mentioned earlier, not having to sense common measurements at all frames is very useful, since we can stagger our acquisition of common and innovation measurements. 
In theory, this enables a measurement strategy where we need to sense only one measurement per frame of the video without having to group consecutive measurements of the SPC. Hence, we can aim to reconstruct videos at the sampling rate of the SPC. To the best of our knowledge, this is the first video CS acquisition design capable of doing this.

\begin{figure*}[!ttt]
\center
\includegraphics[width=\textwidth]{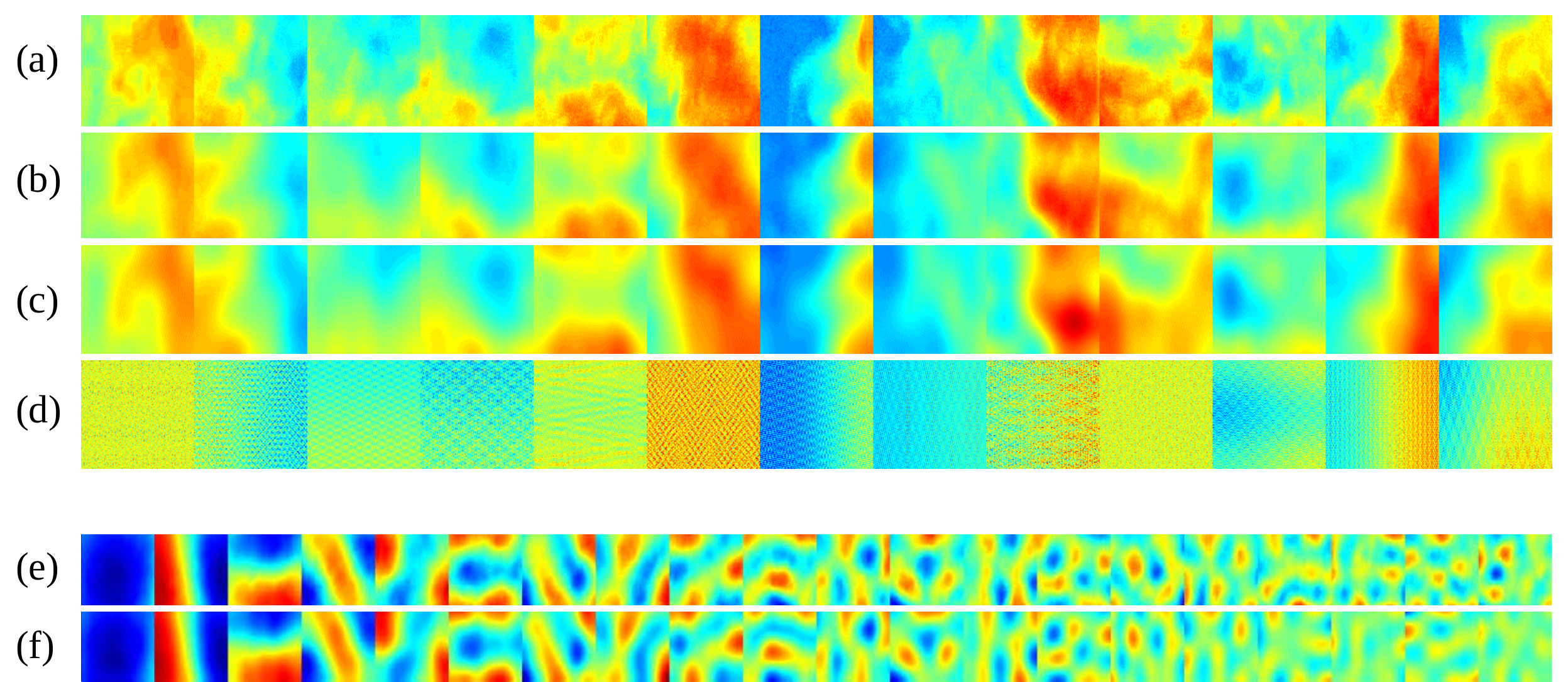}
\caption[Fire reconstruction]{Reconstruction of a fire texture of length $250$ frames and resolution of $N = 128 \times 128$ pixels.
(a-d) Sampling of frames of the (a) Ground truth video, (b) Oracle LDS reconstruction, (c) CS-LDS reconstruction, and (d) naive frame-to-frame CS reconstruction.
 The CS-LDS reconstruction closely resembles the oracle LDS result.
For the CS-LDS results, compressive measurements were obtained at $\widecheck{M}=30$ and $\widetilde{M}=40$ measurements per frame, there by giving a compression ratio of $234\times.$
 Reconstruction was performed with  $d=20$ and $K=30$. 
 (e) Ground truth observation matrix $C$. (f) CS-LDS estimate of the observation matrix $\widehat{C}$. In (e) and (f), the column of the observation matrix is visualized as an image.
Both the frames of the videos and the observation matrices are shown in false-color for better contrast.
}
\label{fig:fireFrames}
\end{figure*}

\subsection{Dynamic Textures}
Our test dataset comprises of videos from the DynTex dataset \cite{dynTex}. 
We used the mean+LDS model from Section \ref{sec:meanLDS} for all the video CS experiments with the 2D DCT as the sparsifying basis for the columns of $C$ and 2D wavelets as the sparsifying basis for the mean.
We used the model-based CoSAMP solver in Algorithm \ref{alg:cosamp} for these results, since it provides explicit control of the sparsity of the mean and the columns of $C$. We used (\ref{eqn:measBounds}) as 
a guide to select these values.

Figure \ref{fig:fireFrames} shows video reconstruction of a dynamic texture from the DynTex dataset \cite{dynTex}. Reconstruction results are under a compression $N/M =  234;$ this is an operating point where frame-to-frame CS recovery is completely infeasible. However, the dynamic component of the scene is relatively small ($d=20$), which allows us to recover the video from relatively few measurements. The reconstruction SNRs of the recovered videos shown are as follows: oracle LDS = $24.97$ dB, frame-to-frame CS  = $11.75$ dB and CS-LDS = $22.08$ dB.

\begin{figure}[!ttt]
\center
\includegraphics[width=\textwidth]{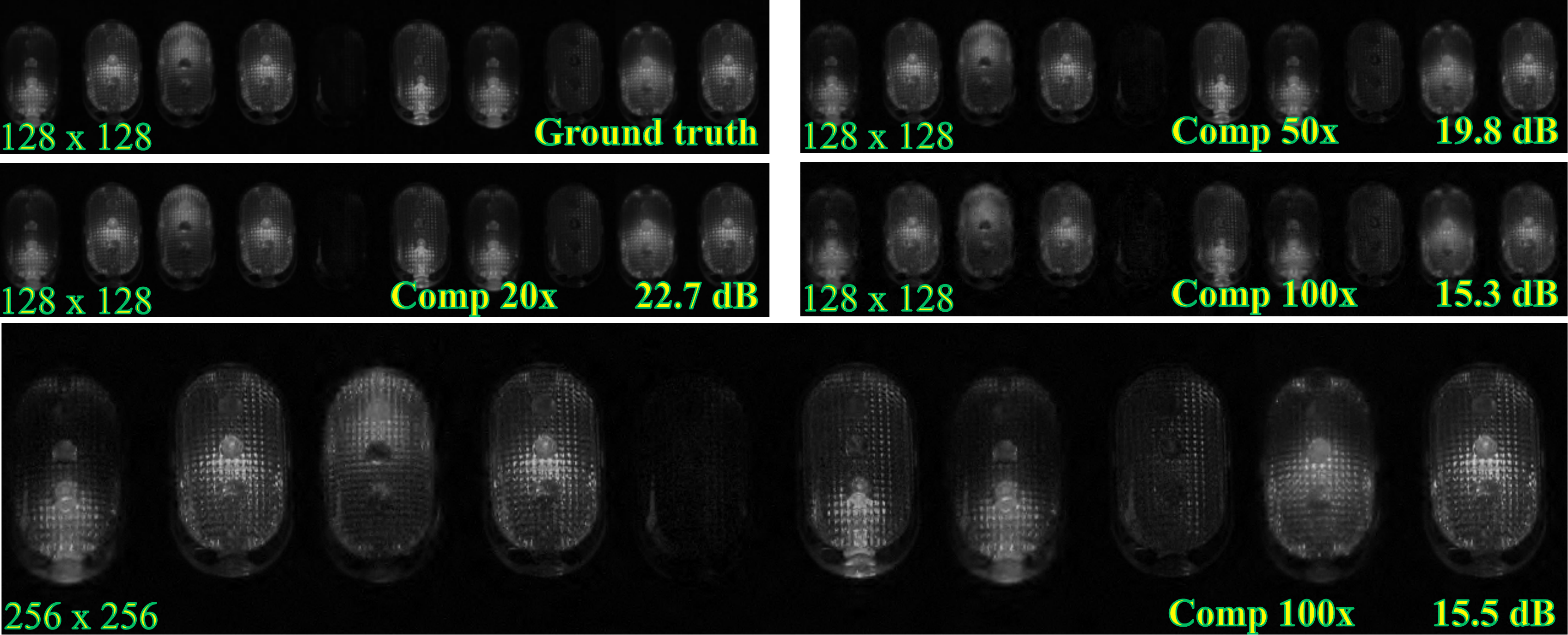}
\caption{Reconstruction of a video comprising of 6 blinking LED lights. We used $d = 7$, $\wcheck{M}=3d$, and $\wtilde{M}$ chosen based on the overall compression ratio $N/(\wcheck{M}+\wtilde{M})$. 
Each row shows a sampling of frames of the video reconstructed at a different compression ratios.
Inset in each row is the resolution of the video used as well as the compression at sensing and the reconstruction SNR.
While performance degrades with increasing compression, it also gains significantly for higher dimensional data; the reconstruction at $256 \times 256$ pixels preserves finer details.}
\label{fig:LED}
\end{figure}

Figure \ref{fig:LED} shows the reconstruction of a video, of 6 blinking LED lights, from the DynTex dataset. We show reconstruction results at different compression
ratios as well as different image resolutions. It is noteworthy that, even at  a $100\times$ compression, the reconstruction
at a resolution of $256 \times 256$ pixels preserves fine details.

\begin{figure*}[!ttt]
\center
\includegraphics[width=\textwidth]{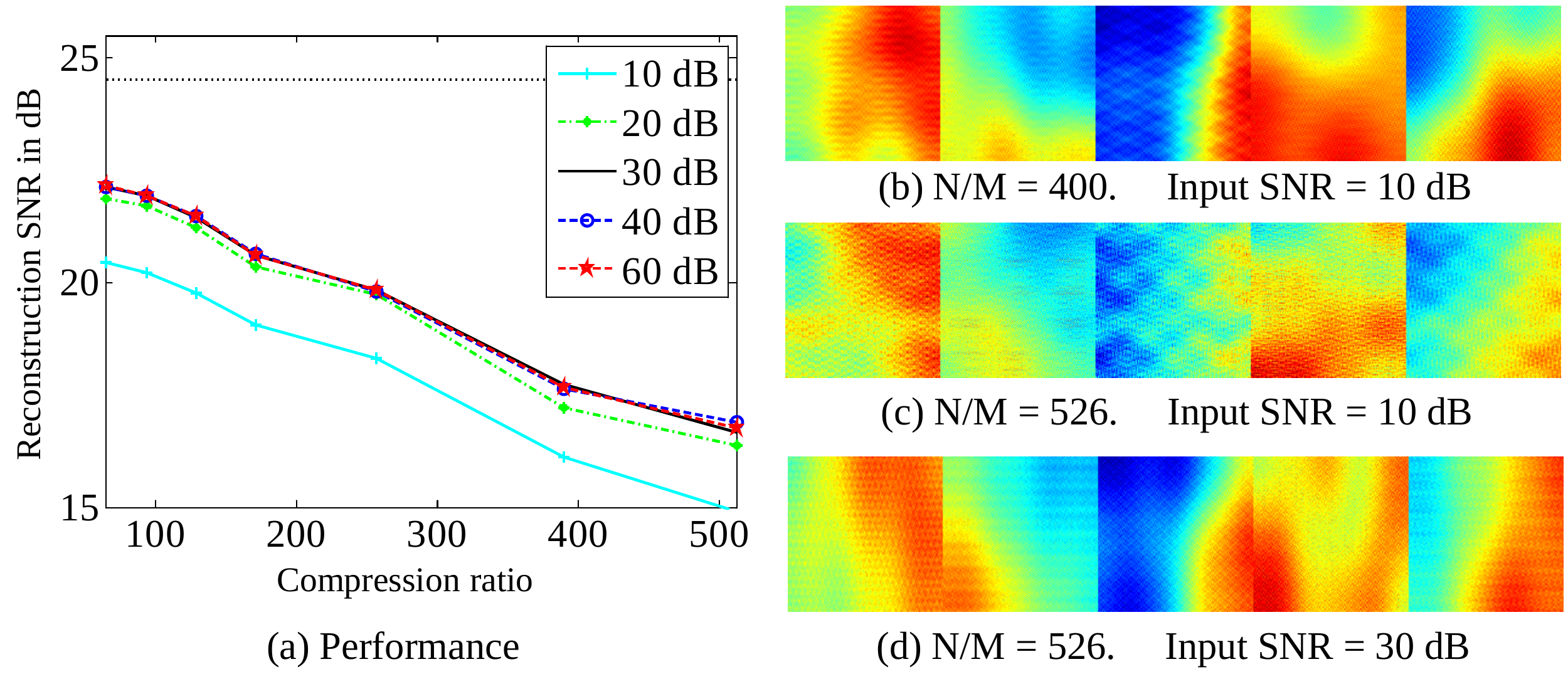}
\caption[Fire noise]{Resilience of the CS-LDS framework to measurement noise.  (a) Performance plot charting the reconstruction SNR as a function of compression ratio $N/M$. Each curve is for a different level of measurement noise as measured using and input SNR. Reconstruction SNRs were computed using 32 Monte-Carlo simulations. The ``black-dotted'' line shows the reconstruction SNR for an $d=20$ oracle LDS. (b-d) Snapshots of video frames at various operating points. The dynamic texture of Fig.\ \ref{fig:fireFrames} was used for this result.}
\label{fig:fireLDSNoise}
\end{figure*}

{\flushleft \textbf{Performance with measurement noise: }}
We validate the performance of our recovery algorithm under various amounts of measurement noise. Note that the columns of $C$ with larger singular values are, inherently, better conditioned to deal with this measurement error. The columns corresponding to the smaller singular values are invariably estimated with higher error. Figure \ref{fig:fireLDSNoise} shows the performance of the recovery algorithm for various levels of measurement noise. 
 The effect of the measurement noise on the reconstructions is perceived only at low input SNRs.
 In part, this robustness to measurement noise is due to the LDS model mismatch dominating
 the reconstruction error at high input SNRs. As the input SNR drops significantly below the model mismatch term, predictably, it starts influencing the reconstructions more. This provides a certain amount of flexibility in the design of potential CS-LDS cameras.
 
 {\flushleft \textbf{Computation time and spatial resolution: }} Figure \ref{fig:resolution} shows recovery algorithm applied to a video of length $560$ frames at different spatial resolutions. 
Shown in Figure \ref{fig:resolution} are the amount of time taken for each recovery, which scales gracefully for increasing spatial resolution, and reconstruction SNR, which approaches the performance of an oracle LDS.
The improvement in reconstruction comes due to the increase in the number of compressive measurements at high resolutions, since the compression ratio is held fixed.
However this does comes at the cost of requiring a faster compressive camera to acquire the data since a larger number of measurements.

%
%
%
%

\begin{figure*}[!ttt]
\includegraphics[width=\textwidth]{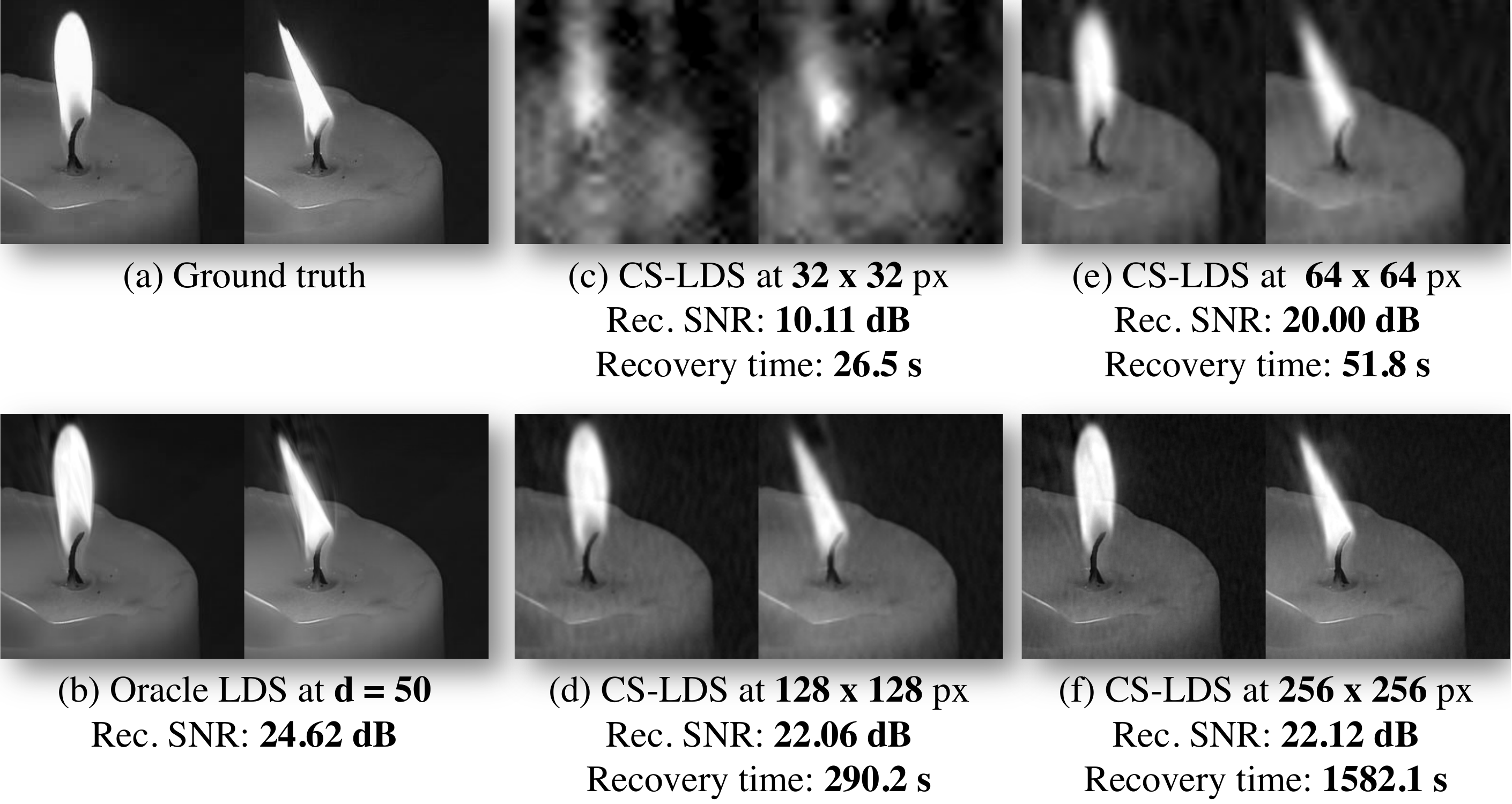}
\caption{Reconstruction of a video at different spatial resolutions. (a) Two frames from the ground truth video of $560$ frames. (b) Reconstructed frame using an oracle with $d = 50$. (c-f) CS-LDS reconstructions for varying spatial resolution, at a compression of $20\times$ and with $d = 50$. Shown are reconstruction SNR as well as recovery times for each reconstruction.  Note that as the spatial resolution increases, the reconstruction performance increases and approaches the performance of oracle LDS. However, for the same compression, recovering at a higher resolution also requires a compressive camera capable of sampling faster.}
\label{fig:resolution}
\end{figure*}

\begin{figure*}[!hhh]
\center
\includegraphics[width=\textwidth]{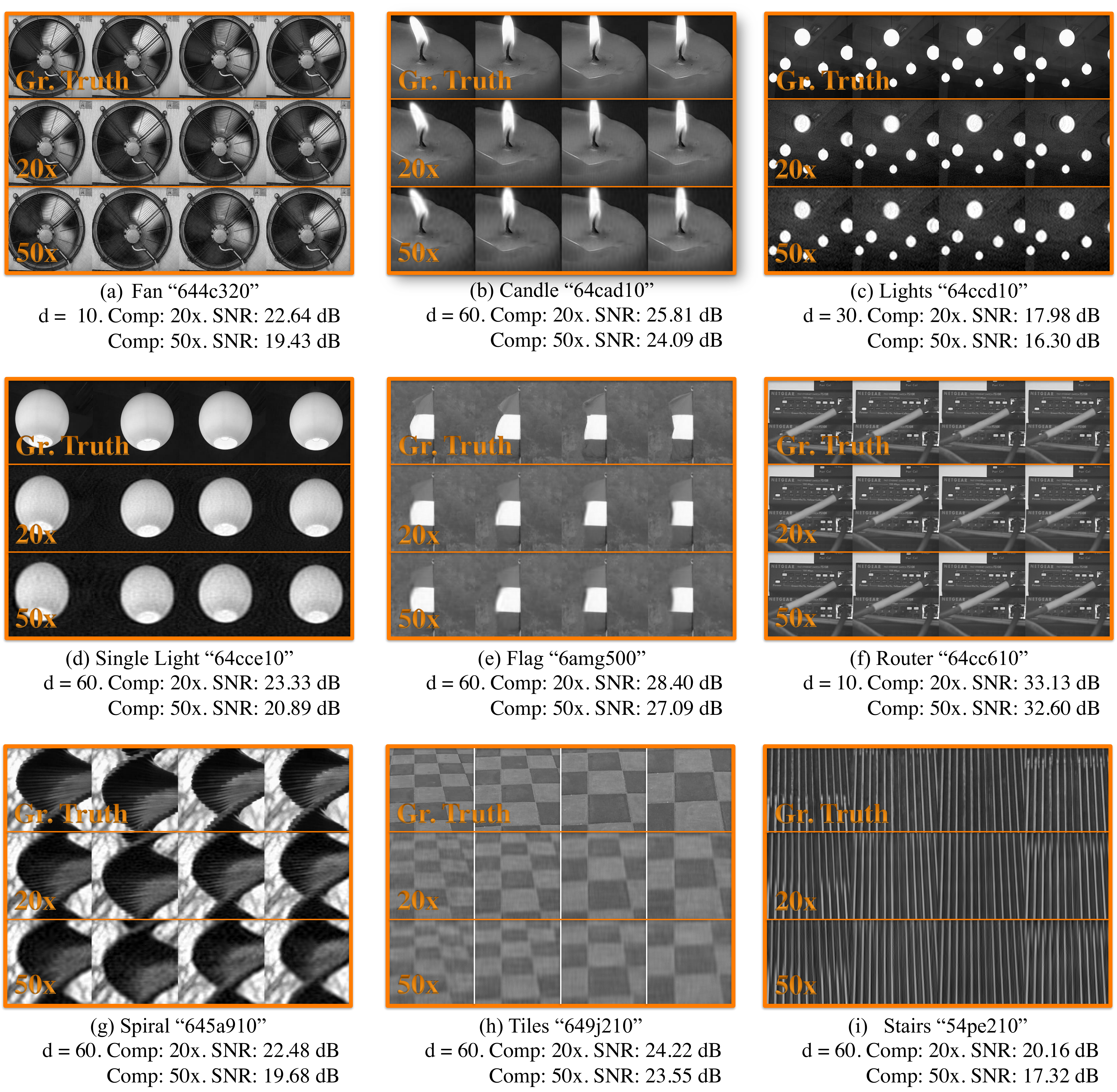}
\caption{A gallery of reconstruction results using the CS-LDS framework. 
Each sub-figure (a-i) shows reconstruction results for a different video.
The three rows of each sub-figure correspond to, from top-bottom, the ground truth video and  CS-LDS reconstructions at compression ratios of $20\times$ and $50\times$. 
Each column is a frame of the video and its reconstruction.
Also noted with each reconstruction is the value of $d$ and the reconstruction SNR for that result.
All videos are from the 
DynTex dataset \cite{dynTex} downsampled at a spatial resolution of $256 \times 256$ pixels. The ``code'' in quotes refer to the name of the sequence in the database.
For all videos, $\wcheck{M}=3d$. 
Results are best viewed under the ``zoom'' tool.
The interested reader is directed to the project webpage \cite{cslds} and the supplemental material for videos of these results.
}
\label{fig:amalgam2}
\end{figure*}

{\flushleft \textbf{Gallery of results:}}  Finally, in Figure \ref{fig:amalgam2}, we demonstrate performance of the CS-LDS methodology for sensing and reconstructing
a wide range of videos. The reader is directed to the supplemental material as well as the project webpage \cite{cslds} for animated videos of these results.

\subsection{Application in activity analysis}
As mentioned in Section \ref{sec:priorlds}, LDSs are often used in classification problems, especially in the context of scene/activity analysis.
A key experiment in this context is to check if the CS-LDS framework recovers videos 
that are sufficiently informative for such applications.
To this end, we experiment with two different activity analysis datasets: the UCSD Traffic Dataset \cite{Chan2005} and the UMD Human Activity Dataset \cite{veeraraghavan2006function}.

\paragraph{Activity recognition methodology} In both the scenarios considered here (single human activity, and traffic), we model the observed video using the linear dynamical model framework. For recognition, we used the Procrustes distance \cite{chikuse2003statistics} between the column spaces of the observability matrices in conjunction with a nearest-neighbor classifier. Given the observability matrix $O(C, A)$ defined in (\ref{eqn:obsMat}), 
let $Q$ be an orthonormal matrix such that ${\rm span}(Q) = {\rm span}(O(C, A))$. 
Given two LDSs, the squared Procrustes distance  between them is given by
\begin{align*}
d^2(Q_1, Q_2) &= \min_{R \in \reals^{d \times d}} {\rm tr}(Q_1 - Q_2R)^T(Q_1 - Q_2R),
\end{align*}
where ${\rm span}(Q_1) = {\rm span}(O(C_1, A_1))$ and ${\rm span}(Q_2) = {\rm span}(O(C_2, A_2))$.
We use this distance function in a nearest neighbor classifier in both the activity classification experiment.

{\flushleft {\bf The UCSD Traffic Dataset}} \cite{Chan2005} consists of $254$ videos capturing traffic of three types: light, moderate, and heavy. 
Each video is of length $50$ frames at a resolution of $64 \times 64$ pixels.
Figure \ref{fig:traffic} shows the reconstruction results on a traffic sequence from the dataset. 
We perform a classification experiment of the videos into these three categories. There are four different train-test scenarios provided with the dataset. For comparison, we also perform the same experiments with fitting the LDS model on the original frames (oracle LDS). We perform classification at two different values of the state space dimension $d$ and at a fixed compression ratio of $25 \times$. Table \ref{tab:trafficClass} shows classification results. We also show comparative results obtained using a probabilistic kernel on dynamic texture models \cite{Chan2005} in conjunction with SVMs in the last two rows of the table. Results for each individual experiment were not reported, only an aggregate number was reported which is shown in the table. It can be seen that even without sophisticated non-linear classifiers, we are able to obtain comparable performance using a simple nearest neighbor classifier using the dynamic texture model parameters. This shows that the obtained parameters possess discriminatory properties, and can be used in conjunction with other sophisticated classifiers that build on dynamic texture models as in \cite{Chan2005}.

{\flushleft {\bf The UMD Human Activity Dataset}} \cite{veeraraghavan2006function} consists of $100$ videos, each of length $80$ frames, depicting $10$ different activities: {\it pickup object, jog, push, squat, wave, kick, bend, throw, turn around and talk on cellhpone.} Each activity was repeated $10$ times, so there were a total of $100$ sequences in the dataset.
As with the traffic experiment, we use an LDS model on the image intensity values without any  feature extraction. 
Images were cropped to contain the human and resized to $330 \times 300$. The state space dimension was fixed at $d=5$ and the compression was varied from $50 \times$ to $200 \times$. We performed a leave-one-execution-out test. The results are summarized in table \ref{tab:recognition_accuracy}. As can be seen, the CS-LDS framework obtained a  classification performance that is comparable to the oracle LDS. For this dataset, both oracle LDS and CS-LDS obtained a perfect classification score of $100 \%$ up to a compression ratio of $50 \times$. Further, as shown in Table \ref{tab:recognition_accuracy}, we obtain comparable performance to a far more sophisticated method employing advance shape-based features for activity recognition. This suggests that the CS-LDS framework should be extremely useful in a wide range of applications beyond just video recovery, and can provide a basis to acquire more sophisticated features for tackling challenging activity recognition problems.

\begin{figure*}[!ttt]
\center
\includegraphics[width=\textwidth]{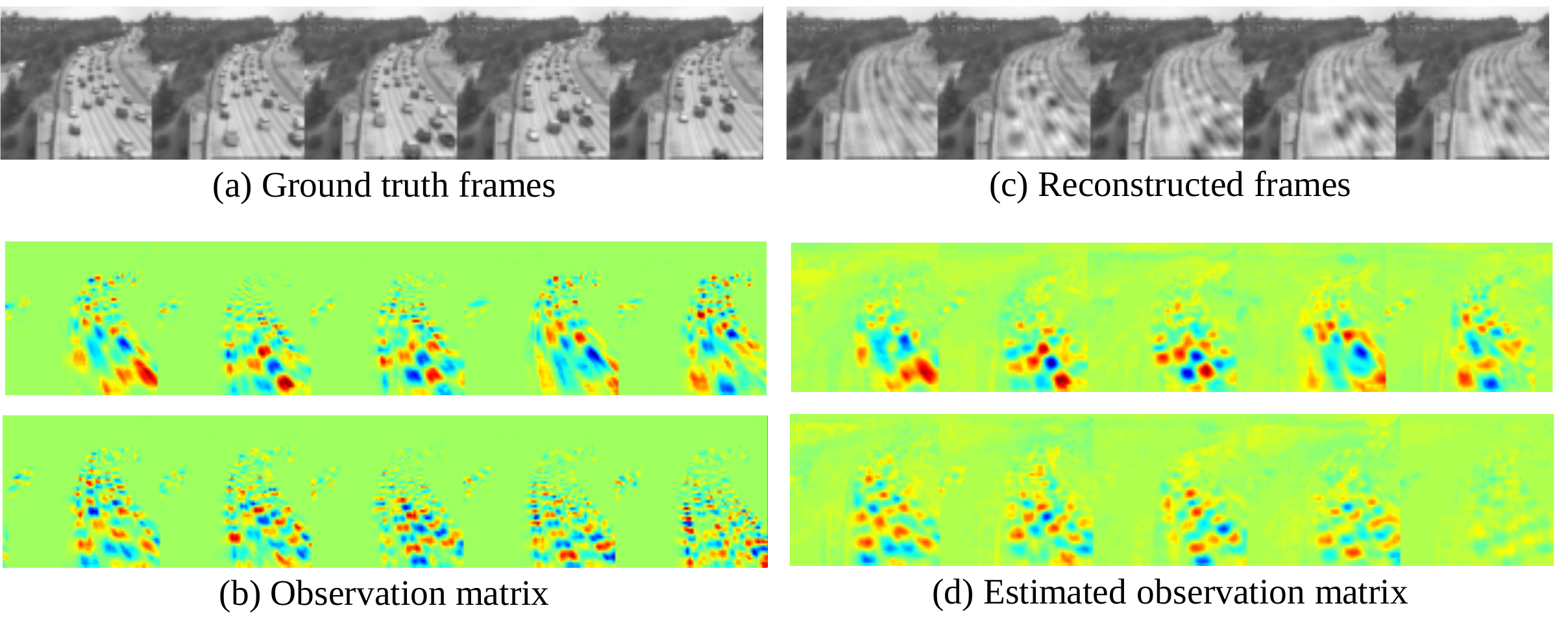}
\caption{{Reconstructions of a traffic scene of $N=64\times64$ pixels at a compression ratio $N/M = 25$, with $d=15$ and $K=40$. (a, c) Sampling of the frames of the ground truth and reconstructed video. (b, d) The first ten columns of the observation matrix $C$ and the estimated matrix $\widehat{C}$; both are shown in false color for improved contrast. The quality of reconstruction and LDS parameters is sufficient for capturing the flow of traffic as seen in the classification results in Table \ref{tab:trafficClass}. }}
\label{fig:traffic}
\end{figure*} 

\begin{table}[!ttt]
\center
\caption[Classification results]{\small{Classification results (in $\%$) on the UCSD Traffic Dataset}}
\begin{tabular}{|c||c|c|c|c|c|}
\hline 
        & Expt 1 & Expt 2 & Expt 3 & Expt 4 & Average\\
        \hline \hline
\textbf{(d = 10)} & &&& &\\
Oracle LDS    & 85.71   &  85.93 &  87.5 &   92.06 & 87.8\%\\
CS-LDS & 84.12  &  87.5  &  89.06&    85.71 & 86.59\%\\
\hline

\hline

\textbf{(d = 5)} & &&& & \\
Oracle LDS    & 77.77 &  82.81 & 92.18 &  80.95 & 83.42\%\\
CS-LDS & 85.71  &  73.43 &  78.1 &    76.1  & 78.34\%\\
\hline

State KL-SVM (d = 10)\cite{Chan2005} & n.a. & n.a. & n.a. & n.a. & 93\%\\
State KL-SVM (d = 5)\cite{Chan2005} & n.a. & n.a. & n.a. & n.a. & 87\%\\
\hline
\end{tabular}

\label{tab:trafficClass}
\end{table}

\begin{table}[!ttt]
\center
\caption{\small{Classification results (in $\%$) on the UMD Human Activity Database}}
\begin{tabular}{|l || c | c|c|c|}
\hline 
Activity &$100 \times$ &$150 \times$&$200 \times$ & Shape dynamics \cite{Veeraraghavan2005}\\
\hline\hline
Pickup Object  & 100 & 100 & 100 & 100\\
Jog  & 100 &100 & 90 & 100\\
Push  & 100 & 90 & 50 & 100\\
Squat &90 &100 & 100 & 100\\
Wave &100&  100 & 60 & 100\\
Kick  &100& 90 & 80 & 100\\
Bend  & 100& 100 & 100 & 100\\
Throw &100& 100 & 90 & 100\\
Turn Around &100& 100 &100 & 100\\
Talk on Cellphone & 100& 20 & 10 & 100\\
\hline
Average &94\% & 90\% & 78\% & 100\%\\
\hline
\end{tabular}
\label{tab:recognition_accuracy}
\end{table}

\section{Discussion} \label{sec:discuss}
In this paper, we have proposed a framework for the compressive acquisition of dynamic scenes modeled as LDSs. In particular, this paper emphasizes the power of predictive/generative video models. In this regard, we have shown that a strong model for the scene dynamics enables stable video reconstructions at very low measurement rates. 
In particular, it enables the estimation of the state sequence associated with a video even at fractional number of common measurements per video frame ($\wcheck{M} \le 1$).
The use of CS-LDS for dynamic scene modeling and classification also highlights the purposive nature of the  framework.

{\flushleft {\bf Implementation issues: }}
The results provided in the paper are mainly based on simulations. While a full-fledged implementation on hardware is beyond the scope of this paper, we discuss some of the key issues and challenges in obtaining such results. Focusing on the single pixel camera (SPC) as our imaging architecture, the achievable compression and resolution are limited by the amount of motion in the scene and the sampling rate of the camera.
We discuss the roles these two parameters play in practice.

Amount of motion determines an inherent notion of frame-rate of the video; note that real life scenes have no notion of ``frame-rate''. If the scene changes negligibly for a time duration $\tau$, then $1/\tau$ (for the largest value of $\tau$) becomes a good measure of frame-rate for a scene.  For example, static scenes do not change over an infinite time duration ($\tau = \infty$) and hence, can be sensed at $1/\tau = 0$ fps.
Given that we seek to sense this scene at a spatial resolution of $N$ pixels, a Nyquist camera would need to operate at $N/\tau$ measurements per second.

Suppose this scene over a duration of $T$ seconds can be well approximated by a $d$-dimensional LDS, then the total number of free variables to estimate is approximately $d(T/\tau)$ for the state sequence and  $Kd$ for the observation matrix. An SPC operating at $1/f_s$ samples per second obtains a total of $T f_s$ compressive measurements.
If CS-LDS were employed at a compression ratio of $C$, then 
\[ C T f_s \ge c_0 \frac{d T}{\tau} + c_1 K d \log (N /K) \implies f_s \ge c_0 \frac{d}{C \tau} + c_1 \frac{K d}{C T} \log(N/K). \]
The key dependence here are on how $d$, $K$ and $\tau$ change as a function of $N$.
In particular, even if $d$ and $K$ increased as $\sqrt{N}$, then $f_s$ would need be scale linearly in $N$ to maintain the same compression level.

{\flushleft {\bf Connection to affine-rank minimization: }}
The pioneering work of Fazel \cite{fazel2002matrix} in developing convex optimization techniques
to system identification problems has interesting parallels to the ideas proposed in this paper.
One of the key ideas espoused in \cite{fazel2002matrix} is that, when the video sequence $\bfy_{1:T}$ is an LDS, the block Hankel matrix $\rm{Hank} (\bfy_{1:T}, q)$ is low rank.
When we have linear measurements of the video frames, we can solve an affine-rank 
problem to recover the video.
However, such methods optimize on the Hankel matrix directly and lead to computationally infeasible designs even for videos of very small dimensions.
In contrast, CS-LDS has been shown to be fast and computationally feasible for very large videos involving millions of variables. 
The key is our two-step solution that isolates the space of unknowns into two manageable sets and solves for each separately. 

{\flushleft {\bf  Universality: }}
An attractive property of random matrix-based CS measurement is the universality of the measurement process. 
Universality implies that the sensing process is independent of the subsequent reconstruction algorithm. This makes the sensing design ``future-proof''; for such systems, if we devise a more sophisticated and powerful recovery algorithm in the future, then we do not need to redesign the camera or the sensing framework.
The CS-LDS framework violates this property. 
The two-step measurement process of Section \ref{sec:cslds},
which is key to breaking the bilinearity introduced by the LDS prior, implies that the CS-LDS design is not universal.
An intriguing direction for future research is the design of a universal CS-LDS measurement process.

{\flushleft {\bf Online tracking:}} We have made  the assumption of a static observation matrix $C$. However, as the length of the video increases, the assumption of a static $C$ is satisfied only by increasing the state space dimension.
An alternate approach is to allow for a time-varying observation matrix $C(t)$ and track it from the compressive measurements. This would give us the benefit of a low state space dimension and yet, be accurate when we sense for long durations. 

\begin{figure}[!ttt]
\includegraphics[width=\textwidth]{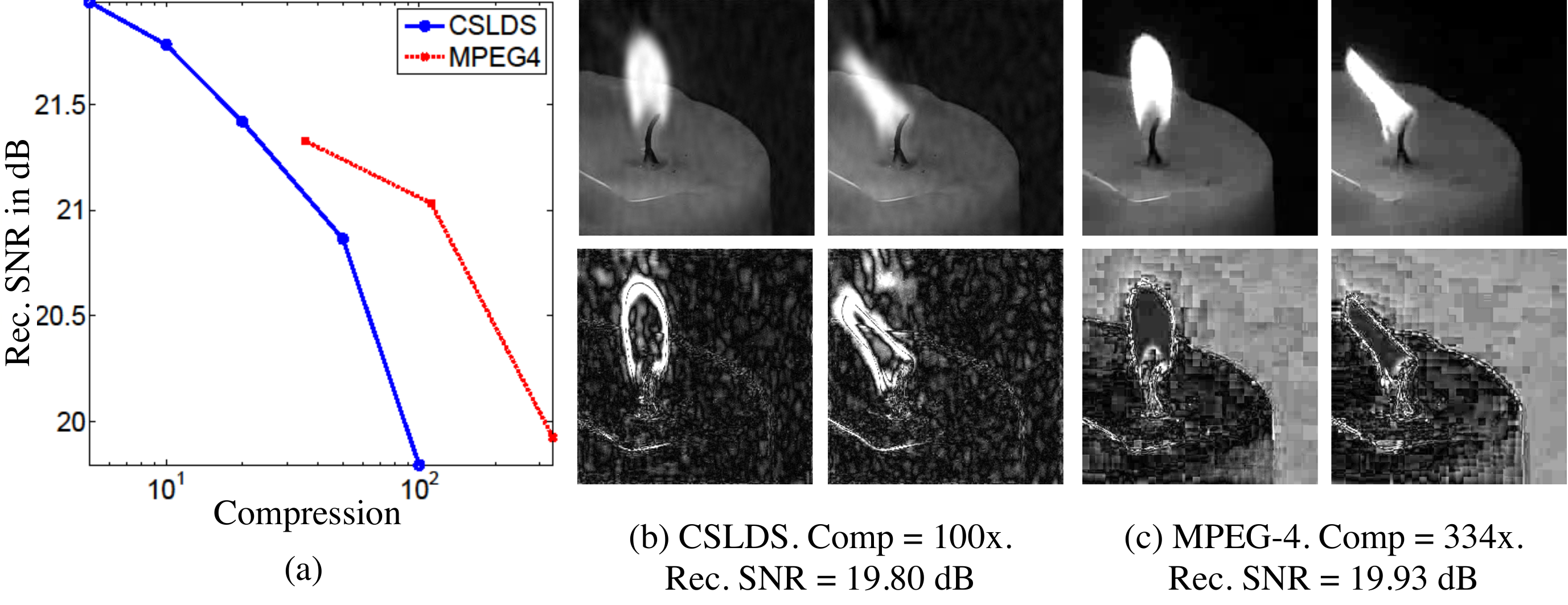}
\caption{Performance of MPEG-4 video compression and CS-LDS on the ``candle'' video  (see Figure \ref{fig:resolution}). Shown are (a) reconstruction SNR at various compression ratios, and (b, c - top) a few reconstructed frames  and (b,c - bottom) error in reconstruction magnified 10$\times$. It is worth nothing that the MPEG-4 algorithm has complete access to the ground truth video, while CS-LDS works purely with undersampled linear measurements of the video. None-the-less, even at the same reconstruction SNR, the quality of MPEG-4 recovery is significantly better. This can be attributed to the non-linear and adapted coding that seeks to mitigate errors that are perceptually dominant.}
\label{fig:mpeg}
\end{figure}

{\flushleft{\bf Beyond LDS:}}
Figure \ref{fig:mpeg} captures the relative performance of MPEG-4 compression algorithm and CS-LDS on a video. MPEG-4 has access to the ground truth video and, as a consequence, it achieves significantly better compressions for the same performance in recovery (see Figure \ref{fig:mpeg}(a)).
Further, it is worth noting that the non-linear encoding in MPEG-4 produces errors that are {\em imperceptible} and hence, even at the same level of reconstruction error, produces videos that are of higher visual quality (see Figure \ref{fig:mpeg}(b,c)).
This points at the inherent drawbacks of a linear encoder.
While the CS-LDS framework makes a compelling case study of LDSs for video CS, its applicability to arbitrary videos is limited.
In particular, it does not extend to simple non-stationary scenes such as people walking or panning cameras (see the result associated with Figure \ref{fig:amalgam2}(h)). This motivates the search for models more general than LDS. In this regard, a promising line of future research is to leverage models from the video compression literature for  CS recovery.

\section*{Acknowledgments}
ACS and RGB were partially supported by the grants NSF CCF-0431150, CCF-0728867, CCF-0926127, CCF-1117939, ARO MURI W911NF-09-1-0383, W911NF-07-1-0185, DARPA N66001-11-1-4090, N66001-11-C-4092, N66001-08-1-2065, ONR N00014-12-1-0124 and AFOSR FA9550-09-1-0432.

RC was partially supported by the Office of Naval Research under the Grant N00014-12-1-0124.

\bibliographystyle{siam}
\bibliography{videocs}


\end{document}